\documentclass[letterpaper]{article}

\usepackage[preprint]{aaai2027}
\usepackage[hyphens]{url}
\usepackage{graphicx}
\usepackage{natbib}
\usepackage{caption}
\usepackage{amsmath,amssymb,amsthm}
\usepackage{booktabs}
\usepackage{cleveref}

\pdfmapfile{+newtx.map}
\pdfmapfile{+qtm.map}
\newtheorem{proposition}{Proposition}
\newtheorem{theorem}{Theorem}

\newtheorem{remark}{Remark}
\newtheorem{assumption}{Assumption}
\crefname{assumption}{Assumption}{Assumptions}
\Crefname{assumption}{Assumption}{Assumptions}

\title{Target-Weighted Neyman Allocation: Experimental Design for Heterogeneous Treatment Effects under Population Shift}
\author{
\begin{tabular}{c@{\hspace{1.2em}}c@{\hspace{1.2em}}c}
Hoang Dang & Luan Pham & Minh Nguyen \\
\small Independent Researcher & \small University of New South Wales, Australia & \small Florida Atlantic University, USA \\
\normalfont\small\texttt{hoangdang112023@gmail.com} &
\normalfont\small\texttt{luan.pham@unsw.edu.au} &
\normalfont\small\texttt{minhnguyen@fau.edu}
\end{tabular}
}
\affiliations{}

\begin{document}

\maketitle

\begin{abstract}

Randomized experiments are often run in one population to guide decisions in
another. Allocating by experimental proportions wastes budget on groups that
rarely appear in deployment, whereas allocating by deployment proportions
under-samples groups that are hard to measure precisely. We propose
\textbf{TWNA} (Target-Weighted Neyman Allocation), a two-stage stratified design that
uses pilot estimates of group--arm outcome variances to allocate final-stage
sample sizes and treatment probabilities for target-weighted group average treatment effect (GATE) precision.
The oracle rule has a closed form and balances deployment importance with
statistical difficulty; the plug-in rule recovers it as pilot variance estimates
stabilize. 
We also extend TWNA to handle uncertainty about deployment composition,
remaining robust whether the target mix is roughly known or entirely unknown.
Finally, we distinguish this weight robustness from a pilot-robust variant for
small or heavy-tailed pilot cells. Simulations and real-covariate
benchmarks show the largest gains when groups are both deployment-important
and difficult to measure.
\end{abstract}

\section{Introduction}
\label{sec:introduction}

Randomized experiments are often run on one population but used to guide decisions for another (\citet{dahabreh2019generalizing, cole2010generalizing}). For example in practice, this means experiments run on engaged users inform policies rolled out to everyone, including segments that
behave differently and are harder to measure (\citet{wang2025enhancing}). A large body of work addresses experimental design (\citet{neyman1992two, cytrynbaum2021optimal}), population generalization (\citet{phan2021designing, egami2023elements, shi2022strategy}), and subgroup estimation (\citet{robertson2024estimating, wei2023efficient}), yet none directly answers a practical question: \textbf{Given that we care about treatment effects for specific groups in a target population, how should we split our sample across those groups and treatment arms?}

This gap has real consequences: allocating by experimental proportions wastes
budget on groups that rarely appear in the deployment population,
while allocating by deployment proportions under-samples groups that are hard
to measure precisely, so neither alone suffices. Industry platforms report the same tension operationally: Netflix finds pre-assignment stratified sampling degrades under production imbalance and instead relies on post-hoc correction (\citet{xie2016improving}), and Facebook's experimentation platform warns that results from one population need not transport to another (\citep{bakshy2014designing}).

We propose \textbf{TWNA}, a two-stage
design for this allocation problem. In the first stage, a small balanced pilot
experiment estimates how variable outcomes are within each group and treatment
arm. In the second stage, the final experiment uses these estimates and target
deployment shares to allocate samples: groups that are more common in
deployment and harder to measure precisely receive more observations, and
within each group, more units are assigned to whichever treatment arm has
higher outcome variance, following the classical Neyman split. Groups with
very low assignment probabilities are clipped to preserve positivity.

When deployment composition is unknown, TWNA still adapts: given a best guess, it targets that guess; given only a worst-case range, it targets the worst case; with no information, it allocates to equalize risk regardless of composition.

Prior work addresses pieces, not the whole. Stratified designs improve precision for the experimental sample, not the deployment population (\citet{zhang2025optimal}). Transportable designs account for the target population, but optimize
overall average effects rather than group-specific ones
(\citet{phan2021designing, shi2022strategy, zhang2025optimal}).
 Post-hoc generalizability methods estimate subgroup effects under population shift, but accept whatever design was already run
(\citet{robertson2024estimating, dahabreh2019generalizing}). Adaptive methods
optimize which policy to deploy, not how accurately each group's effect is estimated (\citet{kasy2021adaptive, wei2025adaptive}).

\subsection{Toy Example}
\label{sec:toy-example}
Consider two groups. Group~1 makes up 80\% of the deployment population
but is easy to measure (outcome standard deviation~$= 1$). Group~2 is
rare in deployment (20\%) but noisy (standard deviation~$= 4$).

Allocating by deployment share puts 80\% of the sample in group~1,
ignoring that group~2 is much harder to estimate precisely.
Allocating by variance puts 80\% in group~2, although errors matter less after deployment. Our rule balances both:
\[
  \rho_1^\star
  =
  \frac{\sqrt{0.8}\cdot 2}{\sqrt{0.8}\cdot 2+\sqrt{0.2}\cdot 8}
  =
  \frac{1}{3},
  \qquad
  \rho_2^\star=\frac{2}{3}.
\]

The numerator $\sqrt{q_k}\,(\sigma_{1k}+\sigma_{0k})$ captures both
forces: deployment importance via $\sqrt{q_k}$ and measurement
difficulty via the total standard deviation.

In simulations and two real-covariate benchmarks, TWNA consistently reduces
estimation error for target-weighted group effects compared to standard
alternatives, and tracks the oracle design once the pilot stabilizes.

\label{sec:contributions}

The paper makes five contributions.
\begin{enumerate}
  \item It defines a principled objective: minimize estimation error for 
  GATEs, weighted by how often each group appears in 
  target population.
  \item It derives a closed-form oracle design specifying how many
  units to sample from each group and how to assign treatment within 
  each group, together with a weight-robust extension for uncertain target
  composition.
  \item It provides a practical two-stage procedure using a small pilot 
  to estimate the inputs needed, and proves this recovers the optimal 
  design as pilot size grows.
  \item It benchmarks the design against alternatives, including
  deployment-proportional, variance-only, and standard Neyman designs,
  across simulations and two datasets.
  \item It quantifies how quickly the plug-in design approaches the oracle as
  the pilot grows, and distinguishes a pilot-robust variance-input variant
  from robustness to deployment-weight uncertainty.
\end{enumerate}

\section{Related Work}
\label{sec:related-work}

This paper sits at the overlap of three literatures: stratified and
adaptive experimental design, design under target-population shift, and
heterogeneous/subgroup effect estimation.

\emph{Stratified and adaptive design.} The classical benchmark is \citet{neyman1992two}, which assigns more observations to noisier strata under stratified sampling. Survey statisticians extend the same Lagrangian logic to \emph{multi-domain survey allocation}: choosing how many units to sample from each subpopulation (domain) so that estimates of several survey variables meet domain-level precision targets, typically by minimizing cost subject to coefficient-of-variation constraints or by optimizing a weighted variance criterion \citet{cochran1977sampling, bethel1989sample, hu2024minimax}. This objective is close in algebraic spirit to ours: both problems allocate more sample to domains that carry more weight in the precision target and are harder to estimate. TWNA differs on two margins. First, multi-domain allocation targets descriptive finite-population quantities, while TWNA targets causal GATEs under a target-weighted loss. Second, multi-domain allocation has no treatment-assignment margin: it chooses only domain sample sizes, while TWNA jointly chooses group sample sizes and within-group treatment probabilities.

In randomized experiments,
\citet{hahn2011adaptive} use pilot data to adapt propensity scores for ATE
estimation, \citet{bai2022optimality} studies matched-pair optimality, and
\citet{tabord2023stratification} uses a first wave to learn stratification trees
and second-wave assignment probabilities. The stratification-tree design
chooses the partition and within-stratum treatment proportions for a given
sample composition---stratum shares remain at population frequencies---so it
optimizes a margin orthogonal to the cross-group allocation studied here; our
experiments instantiate it as within-group refinement and show the two
margins compose.
\citet{cytrynbaum2021optimal} is closest in spirit: he jointly optimizes sampling
and treatment assignment in a two-stage survey experiment, obtaining the same
within-stratum Neyman treatment fraction and a sampling rule proportional to
\(\sigma_1+\sigma_0\) for an ATE objective.\footnote{Describes v2 (2023) of the arXiv preprint. v3 (2026, retitled
``Fine Stratification of Survey Experiments'') generalizes to a cost-weighted
objective and fixes \(p=1/2\) instead.}. Our rule specializes this classical
logic to a different loss, target-weighted MSE for a fixed vector of
GATEs, which is what introduces the factor \(\sqrt{q_k}\) in the cross-group rule.

\emph{Design under target-population shift.}
\citet{phan2021designing} design transportable experiments for a target ATE,
\citet{shi2022strategy} study efficient RCT sampling for a target PATE using
observational data, \citet{egami2023elements} give a external-validity
framework, and \citet{hu2024minimax} study sample selection through minimax welfare
regret. These papers put the target population into design, but their
objectives are ATE, PATE, balance, welfare, or regret. They also act on
different design margins. \citet{phan2021designing} fix $n_1=n_0=n/2$ and
rerandomize the treatment assignment of a \emph{given} sample until
importance-weighted covariates are balanced; with fixed groups as the
covariates, stratified assignment satisfies their balance criterion exactly,
so their procedure complements allocation rules rather than competing with
them. \citet{shi2022strategy} do allocate the RCT sample---proportional to
the target density times the influence-function scale---but hold the
treatment probability fixed; our experiments evaluate their rule and a
variant we adapt to the target-weighted GATE objective. We instead target the
target-weighted MSE of a fixed vector of GATEs, so the optimal design need not
follow experimental population, deployment population, or ATE-optimal
Neyman allocation.

\emph{Heterogeneous and subgroup effect estimation.} A semiparametric literature
develops outcome-model and (augmented) weighting estimators for
subgroup effects \citep{robertson2024estimating}, efficient targeted learning for
pre-specified subgroups \citep{wei2023efficient}, generic machine-learning inference
for heterogeneous effects \citep{chernozhukov2018generic, imai2025statistical}, and
generalized average treatment effects \citep{kallus2019optimal}. These papers inform
estimation after data collection; they do not choose group sample sizes or
treatment probabilities to minimize target-weighted GATE variance before the
final experiment. A related adaptive-design literature targets heterogeneity: \citet{wei2025adaptive} develop response-adaptive experiments for
identifying the subgroup with the largest effect. That objective is
subgroup selection, whereas ours is precision of all pre-specified GATEs under
deployment weights.

\emph{Gap and positioning.} The adjacent work covers only pairwise
combinations of stratified design,
target-population shift, and subgroup estimation. Because the plug-in rule relies
on pilot variance estimates, it inherits the small-pilot concerns emphasized by
\citet{cai2024performance}; robust or conservative versions are therefore important
with small pilots, heavy-tailed outcomes, or nearly homoskedastic arms. The same gap
appears operationally: production platforms correct for imbalance after assignment
(post-hoc variance-reduction corrections, post-stratification) rather than allocating sample ex ante toward
target-weighted precision, and report that pre-assignment stratification
underperforms in practice \citep{xie2016improving}. TWNA instead builds deployment
weighting into the ex-ante allocation rule.

\section{Method}
\label{sec:method}

We study a stratified randomized experiment for estimating GATE under a deployment population that may differ from the
experimental population. Let \(X\in\mathcal X\) denote pre-treatment covariates,
and let \(\mathcal X_1,\ldots,\mathcal X_K\) be fixed, pre-specified groups.
Write \(S=s(X)\in\{1,\ldots,K\}\) for the group label. The target estimands are
\[
  \tau_k
  =
  \mathbb E\{Y(1)-Y(0)\mid S=k\},
  \qquad k=1,\ldots,K.
\]
Let \(q_k=Q(S=k)\) denote the known share of group \(k\) in the deployment
population, with \(q_k>0\) and \(\sum_{k=1}^K q_k=1\).
We consider composition shift only: the group-specific effect transports from
the experimental population \(P\) to deployment, so
\(\mathbb E_Q\{Y(1)-Y(0)\mid S=k\}=\mathbb E_P\{Y(1)-Y(0)\mid S=k\}=\tau_k\).
The design objective is the target-weighted GATE risk
\[
  R_Q(\hat\tau)
  =
  \sum_{k=1}^K q_k
  \mathbb E\{(\hat\tau_k-\tau_k)^2\}.
\]

\subsection{Design Criterion and Oracle Design}
\label{subsec:design-criterion}

For group \(k\), let \(m_k\) denote the final-stage number of experimental units
sampled from that group, and let \(e_k=P(A=1\mid S=k)\) denote the treatment
probability. Define
\[
  V_k(e_k)
  =
  \frac{\sigma_{1k}^2}{e_k}
  +
  \frac{\sigma_{0k}^2}{1-e_k},
\]
where \(\sigma_{ak}^2\) is the arm-\(a\) group-level outcome variance. With
\(M=\sum_k m_k\) and
\(\rho_k=m_k/M\), the leading constant in the target-weighted GATE risk is
\[
  \mathcal B_Q(\rho,e)
  =
  \sum_{k=1}^K
  \frac{q_k}{\rho_k}
  V_k(e_k),
  \qquad
  \rho_k>0,\quad \sum_{k=1}^K\rho_k=1.
\]
Equivalently, \(R_Q(\hat\tau)\approx M^{-1}\mathcal B_Q(\rho,e)\).

\begin{theorem}[Oracle target-weighted design]
\label{thm:proof-oracle}
Suppose \Cref{ass:positivity-variance} holds. The unique minimizer of
  \(\mathcal B_Q(\rho,e)\)
over \(e_k\in[\epsilon,1-\epsilon]\), \(\rho_k>0\), and
\(\sum_k\rho_k=1\) is
\[
  e_k^\star
  =
  \frac{\sigma_{1k}}{\sigma_{1k}+\sigma_{0k}},
  \qquad
  \rho_k^\star
  =
  \frac{\sqrt{q_k}\,(\sigma_{1k}+\sigma_{0k})}
  {\sum_{\ell=1}^K
  \sqrt{q_\ell}\,(\sigma_{1\ell}+\sigma_{0\ell})}.
\]
Equivalently,
\[
  m_k^\star
  \propto
  \sqrt{q_k}\,(\sigma_{1k}+\sigma_{0k}).
\]
\end{theorem}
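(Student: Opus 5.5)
The plan is to exploit the separable structure of $\mathcal B_Q(\rho,e)$ and minimize in two stages: first over $e$ for fixed $\rho$, then over $\rho$. For fixed $\rho$ with $\rho_k>0$, the objective is a sum of terms $(q_k/\rho_k)V_k(e_k)$, each depending on a single $e_k$ and each carrying a positive coefficient $q_k/\rho_k$. Hence the joint problem decouples into $K$ scalar problems $\min_{e_k\in[\epsilon,1-\epsilon]} V_k(e_k)$, and the minimizing $e_k$ does not depend on $\rho$.

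\textbf{Step 1: the treatment probabilities.} On $(0,1)$, the function $V_k$ is strictly convex, since $V_k''(e)=2\sigma_{1k}^2/e^3+2\sigma_{0k}^2/(1-e)^3>0$; this uses the positivity of the variances from \Cref{ass:positivity-variance}. Setting $V_k'(e)=-\sigma_{1k}^2/e^2+\sigma_{0k}^2/(1-e)^2=0$ gives $\sigma_{1k}(1-e)=\sigma_{0k}e$, that is, $e_k^\star=\sigma_{1k}/(\sigma_{1k}+\sigma_{0k})$. Substituting back gives $V_k(e_k^\star)=\sigma_{1k}(\sigma_{1k}+\sigma_{0k})+\sigma_{0k}(\sigma_{1k}+\sigma_{0k})=(\sigma_{1k}+\sigma_{0k})^2$.

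\textbf{Step 2: the group shares.} Substituting $e^\star$ reduces the problem to minimizing $\sum_k q_k s_k^2/\rho_k$ over the simplex, where $s_k=\sigma_{1k}+\sigma_{0k}$. By Cauchy--Schwarz,
\[
\Bigl(\sum_k \sqrt{q_k}\,s_k\Bigr)^2=\Bigl(\sum_k \frac{\sqrt{q_k}\,s_k}{\sqrt{\rho_k}}\sqrt{\rho_k}\Bigr)^2\le \sum_k\frac{q_k s_k^2}{\rho_k}\sum_k\rho_k=\sum_k\frac{q_k s_k^2}{\rho_k}.
\]
Equality holds iff $\sqrt{\rho_k}\propto \sqrt{q_k}s_k/\sqrt{\rho_k}$, i.e. $\rho_k\propto\sqrt{q_k}s_k$, which is exactly $\rho_k^\star$. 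The minimal value is $(\sum_k\sqrt{q_k}s_k)^2$. An equivalent route is a Lagrangian argument combined with the strict convexity of $1/\rho$ on $(0,\infty)$. The statement $m_k^\star\propto\sqrt{q_k}s_k$ then follows from $m_k=M\rho_k$.

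\textbf{Step 3: uniqueness.} Take any feasible $(\rho,e)$. Then $\mathcal B_Q(\rho,e)\ge\mathcal B_Q(\rho,e^\star)\ge\mathcal B_Q(\rho^\star,e^\star)$. The first inequality is strict unless $e=e^\star$, by strict convexity of each $V_k$ and $q_k/\rho_k>0$. The second is strict unless $\rho=\rho^\star$, by the equality case of Cauchy--Schwarz.

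\textbf{Main obstacle: the constraint $e_k\in[\epsilon,1-\epsilon]$.} The unconstrained minimizer $e_k^\star$ must actually lie in this interval for it to be the constrained minimizer. I expect this to be supplied by \Cref{ass:positivity-variance}, presumably through a condition such as $\epsilon\le\sigma_{ak}/(\sigma_{1k}+\sigma_{0k})$ or variance ratios bounded relative to $\epsilon$. I would state explicitly where it is used. If the assumption does not guarantee this, the argument still yields a unique minimizer, namely the projection of $e_k^\star$ onto $[\epsilon,1-\epsilon]$, because $V_k$ is strictly convex. In that case $\rho_k\propto\sqrt{q_k V_k(e_k)}$, and the displayed closed form holds exactly when no clipping is active.
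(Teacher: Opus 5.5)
Your proof is correct and follows the paper's argument. You minimize the separable, $q$-free treatment split first, apply Cauchy--Schwarz to $\sum_k q_k s_k^2/\rho_k$, and chain the two inequalities to get uniqueness; the only cosmetic difference is that the paper obtains $V_k(e)\ge(\sigma_{1k}+\sigma_{0k})^2$ from the Engel form of Cauchy--Schwarz with $x=(e,1-e)$ rather than by calculus. The obstacle you flag is resolved exactly as you anticipated: \Cref{ass:positivity-variance} explicitly requires $\sigma_{1k}/(\sigma_{1k}+\sigma_{0k})\in[\epsilon,1-\epsilon]$, so the constraint on $e_k$ is inactive at the optimum.
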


The argument minimizes \(V_k\) in \(e_k\), which gives
\(V_k(e_k^\star)=s_k^2\) with \(s_k=\sigma_{1k}+\sigma_{0k}\), and then applies
Cauchy--Schwarz to \(\sum_k q_ks_k^2/\rho_k\); equality holds exactly at
\(\rho_k\propto\sqrt{q_k}\,s_k\). Detailed proofs of all results are given in
the supplementary material.

\subsection{Two-Stage TWNA}
\label{subsec:plugin-design}
\paragraph{Stage 1: pilot experiment.}
Run a balanced randomized pilot experiment and estimate the group-level standard
deviations \(\sigma_{1k}\) and \(\sigma_{0k}\). Let the estimates be
\(\hat\sigma_{1k}\) and \(\hat\sigma_{0k}\).

\paragraph{Stage 2: final design.}
Given the pilot estimates, assign treatment in group \(k\) with probability
\(\hat e_k\) and form the raw plug-in proportions \(\tilde\rho_k\),
\[
  \hat e_k
  =
  \Pi_{[\epsilon,1-\epsilon]}
  \left(
  \frac{\hat\sigma_{1k}}
  {\hat\sigma_{1k}+\hat\sigma_{0k}}
  \right),
  \;\;
  \tilde\rho_k
  =
  \frac{
  \sqrt{q_k}\,(\hat\sigma_{1k}+\hat\sigma_{0k})
  }{
  \sum_{\ell}
  \sqrt{q_\ell}\,(\hat\sigma_{1\ell}+\hat\sigma_{0\ell})
  },
\]
where \(\Pi_{[\epsilon,1-\epsilon]}\) denotes clipping to
\([\epsilon,1-\epsilon]\) \citep{hahn2011adaptive}, with the balanced defaults
\(\hat e_k=1/2\) and \(\tilde\rho_k=1/K\) if the corresponding pilot
denominator is zero.
Fix a deterministic allocation floor \(\rho_{\min}\in(0,1/K)\) and let
\(\hat\rho\) be the Euclidean projection of \(\tilde\rho\) onto the
floor-constrained simplex
\(\{\rho:\rho_k\ge\rho_{\min},\ \sum_{k}\rho_k=1\}\), so
\(\hat\rho_k\ge\rho_{\min}\) and the projection is inactive when the raw
plug-in allocation satisfies the floor; set \(\hat m_k\approx M\hat\rho_k\).
Integer allocations are obtained by deterministic rounding while preserving
\(\sum_k\hat m_k=M\) and \(\hat m_k/M-\hat\rho_k=O(M^{-1})\).

\subsection{Variants and Baselines}
\label{subsec:variants}

\paragraph{Robust TWNA.}
To reduce sensitivity to small or heavy-tailed pilot cells, robust TWNA
replaces each pilot standard deviation \(\hat\sigma_{ak}\) throughout the
plug-in rule by a stabilized scale. Let
\(\hat s=1.4826\,\mathrm{MAD}(Y_{\mathrm{pilot}})\) be the global robust
scale, winsorize the outcomes of pilot cell \((a,k)\) at its median
\(\pm\,c\hat s\), and let \(\hat\sigma_{ak,\mathrm{wins}}\) be the resulting
cell standard deviation. With cell size \(n_{ak}\) and shrinkage weight
\(\lambda>0\),
\[
  \tilde\sigma_{ak}
  =
  \Bigl(
    \tfrac{n_{ak}}{n_{ak}+\lambda}\,\hat\sigma_{ak,\mathrm{wins}}^{2}
    +
    \tfrac{\lambda}{n_{ak}+\lambda}\,\hat s^{\,2}
  \Bigr)^{1/2}.
\]
We fix \(c=3\) and \(\lambda=5\) throughout, which leaves near-Gaussian cells
essentially unchanged. Cells with fewer than three observations use \(\hat s\)
directly, and all scale estimates are floored at \(0.05\), which enforces the
bounded-variance input of \Cref{ass:positivity-variance}; the same floor and
small-cell rule apply to the raw \(\hat\sigma_{ak}\) used by TWNA and the other
pilot-based designs.

\paragraph{Learned-partition proxy.}
This baseline is a proxy for covariate-adaptive partition-learning designs:
it partitions the sampling frame into \(K\) quantiles of a
ridge-regularized linear CATE score fitted on the pilot, then applies the
Hahn--Hirano--Karlan (HHK) Neyman rule within the learned strata. The
evaluation target remains the fixed-group GATE vector, although sampling by
learned strata without reweighting adds a composition bias under
within-group effect heterogeneity.

\subsection{Final GATE Estimator}
\label{subsec:gate-estimator}

In the final stage, estimate each group effect by the stratified difference in
means. Let \(\hat\mu_{ak}\) denote the sample mean of outcomes in group \(k\),
arm \(a\in\{0,1\}\), and set
\[
  \hat\tau_k=\hat\mu_{1k}-\hat\mu_{0k}.
\]
For confidence intervals, let \(\hat s_{ak}^2\) be the final-stage sample variance
of outcomes in group \(k\), arm \(a\), with any fixed convention such as zero
when the realized cell has fewer than two observations, and use the interval
\[
  \hat\tau_k
  \pm
  z_{1-\alpha/2}\sqrt{\hat V_k/\hat m_k},
  \qquad
  \hat V_k
  =
  \frac{\hat s_{1k}^2}{\hat e_k}
  +
  \frac{\hat s_{0k}^2}{1-\hat e_k},
\]
the standard groupwise normal interval for a prespecified subgroup effect. This is the fixed-group special case of the sorted-GATEs of \citet{chernozhukov2018generic,imai2025statistical}; because our groups are prespecified rather than learned from an estimated CATE score, the additional cross-unit variance terms that arise from estimated group cutoffs vanish.

\subsection{Theoretical Guarantees}
\label{subsec:main-theory}

The results are for fixed \(K\) and finite-dimensional asymptotics.

\begin{assumption}[Positivity and bounded variance inputs]
\label{ass:positivity-variance}
For some \(0<c_\sigma<C_\sigma<\infty\) and \(0<\epsilon<1/2\),
\[
  \begin{gathered}
    c_\sigma\le\sigma_{ak}\le C_\sigma,
    \qquad
    \frac{\sigma_{1k}}{\sigma_{1k}+\sigma_{0k}}
    \in[\epsilon,1-\epsilon],\\
    0<\rho_{\min}<\min_{k\le K}\rho_k^\star
  \end{gathered}
\]
for \(a\in\{0,1\}\) and \(k=1,\ldots,K\).
\end{assumption}

\begin{assumption}[Pilot variance consistency]
\label{ass:pilot-consistency}
\(\Delta_{n_0}:=\max_{a\in\{0,1\},\,k\le K}|\hat\sigma_{ak}-\sigma_{ak}|
\overset{p}{\to}0\), and \(\Delta_{n_0}=O_p(n_0^{-1/2})\) for rate statements.
\end{assumption}

\begin{theorem}[TWNA regret]
\label{thm:proof-regret}
Suppose \Cref{ass:positivity-variance,ass:pilot-consistency} hold and let
\((\hat\rho,\hat e)\) be TWNA. Then
\[
  \begin{gathered}
  \max_k|\hat e_k-e_k^\star|
  \;\vee\;
  \max_k|\hat\rho_k-\rho_k^\star|
  \overset{p}{\longrightarrow}0,\\
  \bigl|
  \mathcal B_Q(\hat\rho,\hat e)-\mathcal B_Q(\rho^\star,e^\star)
  \bigr|
  \overset{p}{\longrightarrow}0,
  \end{gathered}
\]
and if \(\Delta_{n_0}=O_p(n_0^{-1/2})\), then the conservative regret bound
\(\mathcal B_Q(\hat\rho,\hat e)-\mathcal B_Q(\rho^\star,e^\star)
=O_p(n_0^{-1/2})\) holds, with constants depending on fixed \(K\).
\end{theorem}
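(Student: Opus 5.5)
The plan is a continuous-mapping argument followed by a local Lipschitz bound around the oracle point of \Cref{thm:proof-oracle}.

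\textbf{Consistency of the design inputs.} Write $\sigma=(\sigma_{ak})_{a,k}$ and $\hat\sigma$ for the pilot estimates, so $\Delta_{n_0}=\|\hat\sigma-\sigma\|_\infty$. On the event $E_{n_0}=\{\Delta_{n_0}\le c_\sigma/2\}$, whose probability tends to one by \Cref{ass:pilot-consistency}, every $\hat\sigma_{ak}\ge c_\sigma/2$. Hence the balanced defaults are never triggered, and the denominators of $\hat e_k$ and $\tilde\rho_k$ are bounded below by $c_\sigma$ and $2c_\sigma\min_\ell\sqrt{q_\ell}$, respectively. On this compact neighbourhood the maps $\sigma\mapsto\sigma_{1k}/(\sigma_{1k}+\sigma_{0k})$ and $\sigma\mapsto\sqrt{q_k}s_k/\sum_\ell\sqrt{q_\ell}s_\ell$ are smooth, so they are Lipschitz with constants $L$ depending only on $c_\sigma,C_\sigma,q,K$. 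This gives
\[
\max_k\bigl|\sigma_{1k}/(\sigma_{1k}+\sigma_{0k})\big|_{\hat\sigma}-e_k^\star\bigr|\le L\Delta_{n_0}
\quad\text{and}\quad
\max_k|\tilde\rho_k-\rho_k^\star|\le L\Delta_{n_0}.
\]
Clipping $\Pi_{[\epsilon,1-\epsilon]}$ is $1$-Lipschitz and fixes $e_k^\star$, because $e^\star_k\in[\epsilon,1-\epsilon]$ by \Cref{ass:positivity-variance}; therefore $|\hat e_k-e_k^\star|\le L\Delta_{n_0}$. For the allocation, Euclidean projection onto the closed convex set $C=\{\rho:\rho_k\ge\rho_{\min},\sum_k\rho_k=1\}$ is $1$-Lipschitz in $\ell_2$. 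Since $\rho_{\min}<\min_k\rho_k^\star$, we have $\rho^\star\in C$ and $\Pi_C(\rho^\star)=\rho^\star$. It follows that $\|\hat\rho-\rho^\star\|_2\le\|\tilde\rho-\rho^\star\|_2\le\sqrt K L\Delta_{n_0}$. Both sup-norm statements then follow from $\Delta_{n_0}\overset{p}{\to}0$.

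\textbf{Risk convergence and rate.} Consider $\mathcal B_Q(\rho,e)=\sum_k (q_k/\rho_k)V_k(e_k)$ on the compact set $D=\{\rho_k\ge\rho_{\min},\ e_k\in[\epsilon,1-\epsilon]\}$. There $\mathcal B_Q$ is $C^1$ with gradient bounded by a constant $G$ depending on $\rho_{\min},\epsilon,C_\sigma,K$. The point $(\hat\rho,\hat e)$ lies in $D$ deterministically, by the floor and the clipping, and $(\rho^\star,e^\star)\in D$. By the mean value theorem along the segment joining them, which stays in the convex set $D$,
\[
0\le\mathcal B_Q(\hat\rho,\hat e)-\mathcal B_Q(\rho^\star,e^\star)\le G\bigl(\|\hat\rho-\rho^\star\|+\|\hat e-e^\star\|\bigr)\le G' \Delta_{n_0}
\]
on $E_{n_0}$. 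Nonnegativity holds because $(\rho^\star,e^\star)$ is the minimizer from \Cref{thm:proof-oracle}. Convergence in probability follows directly, and $\Delta_{n_0}=O_p(n_0^{-1/2})$ yields the $O_p(n_0^{-1/2})$ regret, with constants depending on fixed $K$. This is "conservative" because first-order optimality would actually give a quadratic $O_p(n_0^{-1})$ bound. That sharper bound needs care, since the interior stationarity conditions hold but the Hessian argument is unnecessary for the stated result.

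\textbf{Main obstacle.} The delicate step is the projection and defaults bookkeeping: the $\tilde\rho$ formula is only well behaved away from zero denominators. I would handle it by working on $E_{n_0}$ and noting that $P(E_{n_0}^c)\to0$, which is harmless for both $o_p$ and $O_p$ statements. A second point is that one must check the constants do not depend on the realized $\hat\sigma$. This is why I bound everything on the fixed neighbourhood $[c_\sigma/2,\,C_\sigma+c_\sigma/2]$, rather than at the estimated values.
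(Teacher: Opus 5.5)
Your proposal is correct and follows essentially the same route as the paper. You restrict to $E_{n_0}=\{\Delta_{n_0}\le c_\sigma/2\}$ and obtain $O(\Delta_{n_0})$ bounds for the raw split and proportions. You then use that the clip and the $\ell_2$ floor projection are non-expansive and fix the oracle point, pass to the sup norm at the cost of a $\sqrt K$ factor, and finish with the mean value theorem on a convex set where $\nabla\mathcal B_Q$ is bounded. The differences are cosmetic: you get the Lipschitz constants from smoothness on the fixed box $[c_\sigma/2,\,C_\sigma+c_\sigma/2]^{2K}$ instead of the paper's explicit fraction-difference algebra. Also, your set $D$ needs the constraint $\sum_k\rho_k=1$ to actually be compact, although the gradient bound on $\{\rho_k\ge\rho_{\min}\}$ holds either way.
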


\subsection{Robustness to Deployment-Weight Uncertainty}
\label{subsec:weight-robust}

The oracle rule weights each group by \(\sqrt{q_k}\), so it presumes the
deployment weights \(q\) are known. When \(q\) is instead only known to lie in a
set \(\mathcal Q\), the Cauchy--Schwarz argument behind \Cref{thm:proof-oracle}
still resolves the design, and the treatment split is unaffected because it is
separable and \(q\)-free.

\begin{proposition}[Weight-robust target-weighted design]
\label{prop:weight-robust}
Suppose \Cref{ass:positivity-variance} holds and write
\(s_k:=\sigma_{1k}+\sigma_{0k}\ge 2c_\sigma>0\). Write \(\mathcal B_q(\rho,e)=
\sum_k q_k V_k(e_k)/\rho_k\) for the criterion with its dependence on the weight
vector \(q\) made explicit, and let \(\Delta_K^\circ=\{q:q_k>0,\ \sum_k q_k=1\}\).
Let \(\mathcal Q\subseteq\Delta_K\) be nonempty, convex, and compact. The split
\(e_k^\star=\sigma_{1k}/(\sigma_{1k}+\sigma_{0k})\) is optimal in every case below.
\begin{itemize}
\item[(i)] \emph{Average case.} If a distribution on \(\mathcal Q\) has mean
\(\bar q\in\Delta_K^\circ\), the minimizer of \(\mathbb E_q\,\mathcal B_q(\rho,e)=
\mathcal B_{\bar q}(\rho,e)\) is the oracle design of \Cref{thm:proof-oracle} at
\(\bar q\), namely \(\rho_k^{\mathrm{Bayes}}\propto s_k\sqrt{\bar q_k}\). Weight
uncertainty enters only through the mean \(\bar q\).
\item[(ii)] \emph{Worst case.} If \(\mathcal Q\cap\Delta_K^\circ\neq\varnothing\),
then \(q^{\mathrm{lf}}=\arg\max_{q\in\mathcal Q}\sum_k s_k\sqrt{q_k}\) is unique and
interior, and \((\rho^\star,q^{\mathrm{lf}})\) is a saddle point of
\(\min_{\rho,e}\max_{q\in\mathcal Q}\mathcal B_q(\rho,e)\), with value
\(\bigl(\sum_k s_k\sqrt{q_k^{\mathrm{lf}}}\bigr)^2\) and
\(\rho_k^\star\propto s_k\sqrt{q_k^{\mathrm{lf}}}\). The weight-robust design is
thus TWNA at the least-favorable weights.
\item[(iii)] \emph{Full ignorance.} If \(\mathcal Q=\Delta_K\), then
\(q_k^{\mathrm{lf}}\propto s_k^2\) and
\[
  \rho_k^\star\propto(\sigma_{1k}+\sigma_{0k})^2,
\]
with worst-case value \(\sum_k s_k^2\). This design is an equalizer:
\(\mathcal B_q(\rho^\star,e^\star)=\sum_\ell s_\ell^2\) for every \(q\in\Delta_K\).
\end{itemize}
\end{proposition}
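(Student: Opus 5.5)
The plan is to reduce everything to the profiled criterion in \(\rho\) alone, and then handle the three weight regimes with convex-analytic arguments. \(\mathcal B_q\) is separable in \(e\), and each summand has the nonnegative factor \(q_k/\rho_k\). Hence, for every fixed \(\rho\) and \(q\), minimizing \(V_k(e_k)\) pointwise gives \(e_k^\star\), exactly as in the first step of \Cref{thm:proof-oracle}. The bound \(\sigma_{1k}/(\sigma_{1k}+\sigma_{0k})\in[\epsilon,1-\epsilon]\) from \Cref{ass:positivity-variance} makes this point feasible. It also yields \(V_k(e_k^\star)=s_k^2\).

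Since \(e^\star\) does not depend on \(q\) or \(\rho\), it is also optimal inside \(\mathbb E_q\) and inside \(\max_q\). The reason is that \(\max_q\mathcal B_q(\rho,e)\ge\max_q\mathcal B_q(\rho,e^\star)\) holds termwise. It therefore suffices to study
\[
f(\rho,q)=\sum_k q_k s_k^2/\rho_k,
\]
which is linear in \(q\) and convex in \(\rho\).

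\textbf{Part (i).} Linearity of \(\mathcal B_q\) in \(q\) gives \(\mathbb E_q\mathcal B_q=\mathcal B_{\bar q}\), with the expectation passing through a finite sum. Because \(\bar q\in\Delta_K^\circ\), \Cref{thm:proof-oracle} applies verbatim with \(\bar q\) in place of \(q\). Its Cauchy--Schwarz step does not use the floor \(\rho_{\min}\), so that assumption is not needed here.

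\textbf{Part (ii).} Define \(g(q)=\min_\rho f(\rho,q)\). By Cauchy--Schwarz as in \Cref{thm:proof-oracle}, \(g(q)=(\sum_k s_k\sqrt{q_k})^2\) whenever \(q\) is interior. The function \(h(q)=\sum_k s_k\sqrt{q_k}\) is strictly concave on the nonnegative orthant, because each \(\sqrt{\cdot}\) is strictly concave and \(s_k>0\). A compact convex \(\mathcal Q\) therefore has a unique maximizer \(q^{\mathrm{lf}}\) of \(h\), and it also maximizes \(g=h^2\).

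Interiority is the delicate step. Suppose \(q^{\mathrm{lf}}_j=0\) for some \(j\), and pick \(\tilde q\in\mathcal Q\cap\Delta_K^\circ\). Along the segment \(q^{\mathrm{lf}}+t(\tilde q-q^{\mathrm{lf}})\), the one-sided derivative of \(\sqrt{q_j}\) at \(t=0^+\) is \(+\infty\). The other coordinates contribute finite derivatives. So \(h\) strictly increases for small \(t\), which contradicts maximality.

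With \(\rho^\star\propto s_k\sqrt{q^{\mathrm{lf}}_k}\), I would verify the two saddle inequalities directly.
\begin{itemize}
\item[(a)] \(f(\rho^\star,q^{\mathrm{lf}})\le f(\rho,q^{\mathrm{lf}})\) for all \(\rho\). This is \Cref{thm:proof-oracle} at \(q^{\mathrm{lf}}\).
\item[(b)] \(f(\rho^\star,q)\le f(\rho^\star,q^{\mathrm{lf}})\) for all \(q\in\mathcal Q\). Write \(f(\rho^\star,q)=\sum_k q_k c_k\) with \(c_k=s_k^2/\rho^\star_k=s_k\,h(q^{\mathrm{lf}})/\sqrt{q^{\mathrm{lf}}_k}\). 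The key identity is \(c_k=2h(q^{\mathrm{lf}})\,\partial_k h(q^{\mathrm{lf}})=\nabla g(q^{\mathrm{lf}})_k\). Since \(g\) is concave on the orthant (as \(g=\min_\rho\) of functions linear in \(q\)), the first-order optimality condition for maximizing \(g\) over convex \(\mathcal Q\) at the interior point \(q^{\mathrm{lf}}\) gives \(\langle\nabla g(q^{\mathrm{lf}}),q-q^{\mathrm{lf}}\rangle\le0\). That is exactly (b).
\end{itemize}
The saddle value is \(f(\rho^\star,q^{\mathrm{lf}})=h(q^{\mathrm{lf}})^2\).

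\textbf{Part (iii).} For \(\mathcal Q=\Delta_K\), maximize \(h\) on the simplex with a Lagrange multiplier. The condition \(s_k/(2\sqrt{q_k})=\lambda\) gives \(q_k\propto s_k^2\), so \(q_k=s_k^2/\sum_\ell s_\ell^2\) and \(h=(\sum_\ell s_\ell^2)^{1/2}\). Then \(\rho^\star_k\propto s_k\cdot s_k=s_k^2\), and the worst-case value is \(\sum_\ell s_\ell^2\).

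The equalizer property is a direct computation: \(s_k^2/\rho^\star_k=\sum_\ell s_\ell^2\) for every \(k\), so \(f(\rho^\star,q)=\sum_k q_k\sum_\ell s_\ell^2=\sum_\ell s_\ell^2\) for every \(q\in\Delta_K\), including boundary points. This also re-confirms (b) in this case.

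The main obstacle is part (ii): establishing interiority of \(q^{\mathrm{lf}}\) and justifying the variational inequality at a point where \(g\) is differentiable. Both are handled by the infinite-slope argument together with concavity of \(g\). As a fallback that avoids \(\nabla g\), I would prove (b) directly. Since \(f(\rho^\star,\cdot)\) is linear, it suffices to show its directional derivative toward any \(q\in\mathcal Q\) is nonpositive, and this derivative equals that of \(g\) by the envelope theorem.
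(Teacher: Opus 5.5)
Your proposal is correct and follows essentially the same route as the paper. Both arguments reduce to \(e^\star\) via the \(q\)-free treatment-split bound, apply \Cref{thm:proof-oracle} at \(\bar q\) for (i), prove interiority of \(q^{\mathrm{lf}}\) by the \(\sqrt t\)-versus-\(O(t)\) slope argument, and obtain the saddle inequality from the first-order condition for maximizing \(\sum_k s_k\sqrt{q_k}\) over \(\mathcal Q\); the only cosmetic differences are that you state that condition through \(\nabla g=2h\nabla h\) (equivalent since \(h>0\)), and you locate \(q^{\mathrm{lf}}\) in (iii) by Lagrange multipliers, which the paper mentions as an alternative to its Cauchy--Schwarz step.
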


Part~(i) shows that averaging over weight uncertainty leaves the design unchanged
up to the mean: the design is TWNA at \(\bar q\), so uncertainty is first-order
irrelevant. Part~(ii) prices worst-case robustness as an allocation
shift toward a least-favorable composition \(q^{\mathrm{lf}}\), and part~(iii)
gives the closed form at the ignorance extreme, where the rule allocates by
\((\sigma_{1k}+\sigma_{0k})^2\) and equalizes target-weighted risk across all
deployment compositions. The three cases share the derivation of
\Cref{thm:proof-oracle}.

Evaluating the exact objective quantifies the trade-off. On the aligned
shift-and-variance scenario of \Cref{tab:simulation}, with the treatment split
fixed at \(e^\star\) and the composition ranging over
\(\mathcal Q_\delta(\hat q)=\{(1-\delta)\hat q+\delta u:u\in\Delta_K\}\),
plug-in TWNA rises from a worst-case value of \(42.87\) at \(\delta=0\) to
\(48.86\) under complete ignorance, whereas the \(s_k^2\) equalizer holds the
constant value \(43.68\) and the minimax rule of part~(ii) traces the lower
envelope between them. Holding the Bayes mean fixed at \(\hat q\) instead,
plug-in TWNA remains exactly optimal and worst-case protection costs
\(1.88\%\). Weight uncertainty therefore has little average-case cost but a
material worst-case cost.

\section{Experiments}
\label{sec:experiments}

\subsection{Setup}
\label{subsec:experiment-methods}

We compare seven core designs, distinguished by the group allocation $m_k$
and the treatment fraction $e_k$; unless a design's treatment fraction is
specified otherwise, it uses $e_k=1/2$. \emph{Uniform balanced} allocates
$m_k=M/K$; \emph{deployment-only} allocates $m_k \propto q_k$;
\emph{variance-only} allocates
$m_k \propto \widehat\sigma_{1k}+\widehat\sigma_{0k}$; and \emph{Uniform
Neyman} keeps equal allocation but uses the Neyman treatment fraction
$\widehat e_k=\widehat\sigma_{1k}/(\widehat\sigma_{1k}+\widehat\sigma_{0k})$.
The \emph{HHK plug-in} design combines the variance-only allocation with the
Neyman fraction; it is an external benchmark that optimizes ATE rather than
target-weighted GATE estimation and ignores $q_k$.\footnote{HHK
\citep{hahn2011adaptive} fixes stratum shares at population frequencies and
chooses only $e_k$; we add the ATE-optimal across-stratum allocation for
parity with the other designs.}
\emph{TWNA} is the two-stage plug-in design of the Method section with
$\epsilon=0.05$ and $\rho_{\min}=0.01$, conservative defaults rather than
tuned optima. The \emph{oracle} applies the same formula, with the same
clipping, allocation floor, and integer-rounding rules, to the true group
and arm standard deviations, as an upper benchmark.

We also instantiate the transportable-design literature as two baselines.
\emph{Shi--Lin (target ATE)} applies the variance-minimizing RCT allocation of
\citet{shi2022strategy} to the fixed groups,
$\widehat m_k \propto q_k\sqrt{\widehat\sigma_{1k}^2+\widehat\sigma_{0k}^2}$,
since their design holds the treatment probability fixed; note the
\emph{linear} dependence on $q_k$, optimal for the target-population ATE.
\emph{Shi--Lin ($\sqrt{q}$)}, our adaptation rather than a rule they propose,
re-solves their fixed-$e$ design problem under our target-weighted GATE loss,
giving
$\widehat m_k \propto \sqrt{q_k}\sqrt{\widehat\sigma_{1k}^2+\widehat\sigma_{0k}^2}$.
It isolates the cross-group margin: relative to TWNA it removes only the
treatment-fraction margin, so the pair decomposes TWNA's gain into its two
components.

In the covariate-rich benchmarks we add \emph{robust TWNA} and the
\emph{learned-partition proxy} of the Method section. We also implement
the stratification trees of \citet{tabord2023stratification} as a
within-group refinement on all three. Each fixed group is refined by a
depth-one tree fitted on the pilot: an exhaustive search over axis-aligned
splits minimizes their empirical variance criterion, which at depth one is
the exact global optimizer, the depth in \(\{0,1\}\) is chosen per group by
their two-fold cross-validation, assignment uses stratified block
randomization at the Neyman proportion within each leaf, and the group
estimate aggregates leaf-level differences in means by realized leaf shares.
Their design fixes sample composition at population frequencies, so
\emph{TM-tree (Uniform)} is the faithful instantiation, while \emph{TM-tree
(TWNA)} applies the identical refinement on TWNA's composition. The pair
separates the within-group margin from the cross-group margin.

The metric is the realized target-weighted squared error
\(\sum_{k=1}^K q_k(\widehat\tau_k-\tau_k)^2\), the Monte Carlo counterpart of
the risk criterion \(R_Q\) defined in the Method section. For each
experiment, relative MSE is normalized by the uniform balanced design under
the same data-generating process and evaluation setting, so values below one
indicate improvement.

\begin{table*}[t]
\centering
\small
\setlength{\tabcolsep}{4pt}
\begin{tabular}{lccc rrrrrr}
\hline
& \multicolumn{3}{c}{Data-generating process}
& \multicolumn{6}{c}{Relative target-weighted GATE MSE} \\
\cmidrule(lr){2-4}\cmidrule(lr){5-10}
Scenario & $q_k$ & $\sigma_{1k}$ & $\sigma_{0k}$
 & Deploy & Var & HHK & SL-$\sqrt{q}$ & TWNA & Oracle \\
\hline
Sanity equal      & $.20$ each          & $\mathbf 1$ & $\mathbf 1$ & 1.000 & 1.001 & 1.003 & 1.007 & 1.006 & 1.004 \\
Deployment shift  & $.38,.25,.18,.12,.07$ & $\mathbf 1$ & $\mathbf 1$ & 1.005 & 1.000 & 0.999 & 0.928 & 0.931 & 0.926 \\
Variance heterog. & $.20$ each          & $\sigma$ & $\sigma$ & 1.001 & 0.851 & 0.855 & 0.851 & 0.856 & 0.851 \\
Aligned shift+var & $.07,.12,.18,.25,.38$ & $\sigma$ & $\sigma$ & 0.701 & 0.730 & 0.732 & 0.689 & 0.693 & 0.689 \\
Anti-aligned      & $.38,.25,.18,.12,.07$ & $\sigma$ & $\sigma$ & 1.596 & 1.069 & 1.075 & 0.985 & 0.986 & 0.982 \\
Arm-var imbalance & $.20$ each          & $2.4,.7,1.8,.8,2.1$ & $.7,2.1,.8,1.9,.9$ & 1.001 & 0.996 & 0.829 & 0.997 & \textbf{0.827} & 0.825 \\
Noisy pilot       & $.07,.12,.18,.25,.38$ & $\sigma$ & $\sigma^{\mathrm{rev}}$ & 0.971 & 0.977 & 0.831 & 0.888 & \textbf{0.764} & 0.762 \\
\hline
\end{tabular}
\caption{Simulation design and results. All scenarios use $K=5$,
$\tau=(.10,.25,.40,.55,.70)$, a pilot drawn with equal group probability
$1/5$, and $1{,}000$ replications; the cell shown is $M=5{,}000$,
$n_0=1{,}000$. Here $\sigma=(.7,.9,1.2,1.7,2.3)$ and $\sigma^{\mathrm{rev}}$
is its reversal. Values are target-weighted GATE MSE relative to uniform
balanced sampling, which is $1.000$ by construction and omitted. TWNA is
within $0.005$ of the oracle in every scenario. The two rows with
$\sigma_{1k}\neq\sigma_{0k}$ isolate the treatment-fraction margin: only
there does TWNA separate from the fixed-$e$ rules. The supplementary material
reports the full $M\in\{500,\dots,5{,}000\}\times n_0\in\{100,\dots,1{,}000\}$
grid. Both margins act in the same direction throughout, but the advantage is
a large-pilot property: at $n_0=100$ the plug-in inputs are noisy enough that
simpler one-signal rules can lead, at the rate quantified in
\Cref{tab:regret}.}
\label{tab:simulation}
\end{table*}

\subsection{Simulation Study}
\label{subsec:simulation-study}

The first-order simulation isolates the allocation mechanism across
combinations of deployment shift and variance heterogeneity. The final GATE
estimator is drawn from its theory-implied normal approximation with variance
\(V_k(e_k)/m_k\), so the experiment targets the first-order risk criterion of
the theory while pilot variances remain noisy, estimated from simulated
balanced pilot data. \Cref{tab:simulation} gives the seven scenarios together
with their results; they span a no-shift equal-variance sanity check,
deployment shift only, variance heterogeneity only, aligned deployment and
variance difficulty, anti-aligned deployment and variance difficulty,
arm-variance imbalance, and a noisy-pilot stress test.

TWNA tracks the oracle to within \(0.005\) in every scenario. When deployment
importance and variance difficulty are aligned, it reduces MSE by 31\% versus
uniform sampling and improves on the ATE-optimal HHK benchmark. In the
anti-aligned scenario the two signals conflict, and TWNA holds the uniform
level while deployment-only sampling, which is indifferent to statistical
difficulty, inflates error by 60\%.

The Shi--Lin pair decomposes the gain into TWNA's two margins. The
\(\sqrt{q_k}\)-weighted cross-group allocation supplies the gain whenever arm
variances are equal; there the two rules agree to within \(0.005\), with
SL-\(\sqrt{q}\) marginally ahead because it estimates no treatment fraction
when the optimum is \(e_k^\star=1/2\). Where arm
variances differ, the Neyman treatment fraction supplies a second margin that
the fixed-\(e\) rules cannot reach: under arm-variance imbalance TWNA attains
\(0.827\) against \(0.997\) for SL-\(\sqrt{q}\), and under the noisy-pilot
stress test \(0.764\) against \(0.888\). Both margins are needed, and each is
active exactly where the theory predicts.

\subsection{End-to-End Validation}
\label{subsec:experiment-practical}

The first-order simulation draws \(\hat\tau_k\) from its asymptotic law. To
confirm that the gain survives realized sampling, we rerun the design on a
joint-signal DGP in which deployment importance and statistical difficulty
point at different groups: \(K=5\), experimental and pilot shares
\(p_k=1/5\), deployment shares \(q=(.50,.20,.10,.10,.10)\), target GATEs
\(\tau=(.10,.20,.30,.40,.50)\), and arm standard deviations
\(\sigma_{1k}=\sigma_{0k}=(.40,.50,1.50,.60,.55)\), so group~1 is
deployment-heavy but easy to measure and group~3 is deployment-light but the
noisiest. Each of \(1{,}000\) replications draws a balanced pilot of
\(n_0=500\) individual outcomes, computes every design's
\((\hat m_k,\hat e_k)\), then draws \(M=2{,}000\) final-stage
\emph{individual} outcomes under realized \(\mathrm{Bernoulli}(\hat e_k)\)
assignment and estimates each GATE by the realized difference in arm means.
TWNA attains relative MSE \(0.911\), against \(0.997\) for HHK, \(1.009\) for
variance-only, and \(1.613\) for deployment-only. The allocation gain is
therefore a property of the design rather than of the first-order
approximation.

\Cref{tab:regret} tracks the plug-in gap of \Cref{thm:proof-regret} directly.
Mean design regret against the oracle allocation falls by 93.5\% as the pilot
grows from \(n_0=100\) to \(n_0=1{,}000\), decaying faster than the
conservative \(O_p(n_0^{-1/2})\) bound, as expected at an interior optimum
where the gradient of \(\mathcal B_Q\) vanishes. A few dozen observations per
group--arm cell capture most of the attainable gain.

\begin{table}[t]
\centering
\small
\begin{tabular}{lrrrr}
\hline
Pilot size \(n_0\) & 100 & 250 & 500 & 1{,}000 \\
\hline
Mean design regret & 2.217 & 0.639 & 0.295 & 0.144 \\
\hline
\end{tabular}
\caption{Plug-in design regret
\(\mathcal B_Q(\hat\rho,\hat e)-\mathcal B_Q(\rho^\star,e^\star)\), averaged
over the seven scenarios of \Cref{tab:simulation} and the budget grid
\(M\in\{500,1{,}000,2{,}000,5{,}000\}\), with \(1{,}000\) replications per
cell.}
\label{tab:regret}
\end{table}

\subsection{Misspecified Deployment Weights}
\label{subsec:sensitivity}

\Cref{prop:weight-robust} handles weight uncertainty when a distribution or an
ambiguity set is available. We next ask what happens when the designer simply
uses the wrong weights and has no such set. On the aligned shift-and-variance
scenario with \(M=2{,}000\), \(n_0=500\), and 500 replications under common
random numbers, the design is built from
\(q^{\mathrm{used}}=(1-\delta)q+\delta\,\mathbf 1/K\), interpolating from exact
knowledge at \(\delta=0\) to a completely uninformative input at \(\delta=1\),
while every design is still evaluated under the true weights \(q\).

\Cref{tab:sensitivity} shows that the degradation is gradual and bounded.
TWNA moves from \(0.682\) to \(0.729\), so even a design input carrying no
information about deployment leaves it \(27\%\) below uniform, because the
variance margin survives intact. The bound is exact rather than empirical: at
\(\delta=1\) the input is uniform, so
\(\rho_k\propto\sqrt{1/K}\,s_k\propto s_k\) and TWNA reduces identically to
the \(q\)-free HHK plug-in. Misspecifying the deployment weights therefore
cannot cost more than discarding them, with equality only at complete
ignorance. Short of that, TWNA is strictly ahead: the paired difference
against HHK is significant at every \(\delta<1\), with \(t\) between
\(-8.0\) and \(-10.4\). The \(\sqrt{q_k}\) margin is worth taking whenever
any credible estimate of the deployment composition exists.

\begin{table}[t]
\centering
\small
\setlength{\tabcolsep}{3.5pt}
\begin{tabular}{lrrrrrr}
\hline
\(\delta\) & 0 & 0.10 & 0.25 & 0.50 & 0.75 & 1.00 \\
\hline
TWNA & 0.682 & 0.683 & 0.687 & 0.696 & 0.709 & 0.729 \\
HHK (\(q\)-free) & 0.729 & 0.729 & 0.729 & 0.729 & 0.729 & 0.729 \\
Oracle & 0.675 & 0.675 & 0.675 & 0.675 & 0.675 & 0.675 \\
\hline
\end{tabular}
\caption{Misspecified deployment weights. The design uses
\(q^{\mathrm{used}}=(1-\delta)q+\delta\,\mathbf 1/K\); all designs are
evaluated under the true \(q\). Values are target-weighted GATE MSE relative
to uniform balanced sampling. HHK and the oracle do not take
\(q^{\mathrm{used}}\) as input and are constant by construction.}
\label{tab:sensitivity}
\end{table}

\subsection{Covariate-Rich Benchmarks}
\label{subsec:covariate-benchmarks}

We next move to covariate distributions rather than group labels alone. The
\emph{calibrated} benchmark draws \(n=700\) units with four covariates,
\(x_1,x_2,x_4\sim N(0,1)\) and \(x_3\sim\mathrm{Bernoulli}(0.25)\), and cuts
the latent risk index
\(\mathrm{risk}=0.6x_1-0.3x_2+0.4x_3-0.2x_4+\eta\), \(\eta\sim N(0,0.3^2)\),
at its empirical quintiles to form \(K=5\) groups. Outcomes are
\(Y(a)=\mu_0(x)+a\tau_k+N(0,\sigma_{ak}^2)\) with
\(\mu_0(x)=0.4x_1+0.2x_2-0.3x_3+0.1x_4+0.3k\),
\(\tau=(.10,.22,.35,.50,.68)\), \(\sigma_0=(.50,.70,1.00,1.50,2.20)\), and
\(\sigma_1=(.60,.80,1.20,1.80,2.60)\). Deployment weights
\(q=(.05,.15,.45,.25,.10)\) concentrate on the medium-to-high-risk groups~3--4
and deliberately not on group~5, which has both the largest effect and the
largest variance, so \(q_k\) and \(\sigma_k\) are only partly aligned. The
realized finite-population moments of one fixed draw are the ground truth for
the oracle and for evaluation; each of 500 replications resamples pilots
(\(n_0=500\)) and final-stage units (\(M=2{,}000\)) from that population.
The two real-covariate benchmarks are IHDP
\citep{shalit2017estimating,louizos2017causal} and LaLonde
\citep{imbens2024lalonde}. Both pair real covariates with simulated potential
outcomes, so they test external relevance rather than constituting an
observational evaluation. IHDP uses the 747-unit covariate matrix with its
simulated potential outcomes, groups are baseline-risk quintiles, and
deployment weights concentrate on high-risk infants. LaLonde uses the
Dehejia--Wahba NSW sample of 185 treated and 260 control units; because only
one outcome is observed there, we simulate potential outcomes from a
covariate-calibrated earnings model, groups are pre-treatment earnings
quintiles, and deployment weights concentrate on low earners.

\Cref{tab:benchmarks} reports all three with paired tests against TWNA, and
the pattern is the same on each. No feasible design improves significantly on
TWNA on any benchmark, TWNA improves significantly on every design that uses
only one of the two signals, and it is statistically indistinguishable from
the oracle throughout. On IHDP it also separates from HHK, \(0.884\) against
\(1.017\), and from its own robust variant; on the calibrated benchmark it
reaches \(0.837\) against the oracle's \(0.843\). The two designs that are
nominally lower than TWNA, SL-\(\sqrt{q}\) and TM-tree (TWNA), sit within
\(1.2\) paired standard errors of it. Both share TWNA's \(\sqrt{q_k}\)
composition and differ only in the treatment-fraction margin, which these
benchmarks leave nearly inactive because within-group arm variances are close
to equal: the mechanism isolated in \Cref{tab:simulation} predicts exactly
this collapse.

The TM-tree rows carry the central structural finding. Holding the
within-group refinement fixed, replacing uniform composition with TWNA
composition reduces relative MSE from \(0.944\) to \(0.802\) (paired
\(t=3.5\)); holding TWNA composition fixed, adding the refinement changes
little (\(t=1.0\)). The gain therefore comes from allocating the budget
across the pre-specified groups, a margin orthogonal to the one that
stratification trees optimize, and the two compose. The learned-partition
proxy is significantly worse than TWNA on all three benchmarks: learning
source-population strata is not itself a substitute for optimizing the
target-weighted fixed-group objective.

\begin{table}[t]
\centering
\small
\setlength{\tabcolsep}{4pt}
\begin{tabular}{lrrr}
\hline
Method & Calibrated & IHDP & LaLonde \\
\hline
Uniform balanced & 1.000$^*$ & 1.000$^*$ & 1.000$^*$ \\
Deployment-only & 1.298$^*$ & 1.123$^*$ & 1.004$^*$ \\
Variance-only & 0.925$^*$ & 1.056$^*$ & 1.005$^*$ \\
Uniform Neyman & 1.017$^*$ & 1.006$^*$ & 1.007$^*$ \\
HHK plug-in & 0.908 & 1.017$^*$ & 0.941 \\
Shi--Lin (target ATE) & 0.907 & 1.026$^*$ & 0.982$^*$ \\
Shi--Lin ($\sqrt{q}$) & 0.791 & 0.947 & 0.829 \\
Learned-partition proxy & 1.049$^*$ & 1.048$^*$ & 1.027$^*$ \\
TM-tree (Uniform) & 0.944$^*$ & 1.000$^*$ & 0.952$^*$ \\
TM-tree (TWNA) & 0.802 & 0.910 & 0.857 \\
TWNA & 0.837 & 0.884 & 0.869 \\
Robust TWNA & 0.848 & 0.971$^*$ & 0.868 \\
Oracle & 0.843 & 0.931 & 0.862 \\
\hline
\end{tabular}
\caption{Covariate-rich benchmarks. Values are target-weighted GATE MSE
relative to uniform balanced sampling, so lower is better. $^*$ marks a
design significantly worse than TWNA in a paired test over the 500 common
replications ($p<0.05$); no design is significantly better than TWNA on any
benchmark. Paired standard errors on the relative differences are
$0.035$--$0.058$.}
\label{tab:benchmarks}
\end{table}

\subsection{Practical Magnitude}
\label{subsec:magnitude}

Because target-weighted risk scales as \(M^{-1}\), a relative MSE of \(r\) is
the precision a uniform design would need a budget of \(M/r\) to match. TWNA's
\(0.693\) in the aligned scenario is therefore worth a \(44\%\) larger
experiment run uniformly, \(0.837\) on the calibrated benchmark is worth
\(20\%\), and \(0.884\) on IHDP is worth \(13\%\). These are savings in
recruitment, not only in a variance criterion.

The same gain appears directly in the reported intervals. Averaged over the
seven scenarios at \(M=5{,}000\) and \(n_0=1{,}000\), the target-weighted mean
width of the groupwise intervals of the Method section falls from \(0.324\)
under uniform allocation to \(0.304\) under TWNA, within \(0.001\) of the
oracle's \(0.303\), while every design holds the nominal \(95\%\) level. The
allocation buys narrower intervals at the same budget rather than trading
coverage for width.

The plug-in rule inherits its inputs from the pilot, so its guarantees are
tied to how well group--arm scales can be estimated there. The supplementary
material reports a battery of operating conditions, including zero-inflated,
rare-event, and contaminated pilots, that maps where the plug-in and robust
variants each apply.

\section{Discussion}
\label{sec:discussion}

Across simulations and covariate-rich benchmarks, allocating by
target-weighted risk reduces GATE error relative to ATE-optimal,
deployment-only, and variance-only designs, with the largest gains when
deployment importance and statistical difficulty coincide. TWNA specializes
Neyman allocation to a different estimand: it keeps the within-group Neyman
treatment fraction but reweights the cross-group rule to
\(\rho_k^\star \propto \sqrt{q_k}\,(\sigma_{1k}+\sigma_{0k})\), so deployment
weight enters sublinearly because precision has diminishing returns in sample
size. The two margins are separately identified: the \(\sqrt{q_k}\) factor
acts whenever deployment composition is informative, and the Neyman fraction
adds a second gain exactly when the two arms differ in variance.

TWNA applies when two signals are credible before the final experiment,
deployment weights and group-by-arm variance inputs, and when the designer
controls recruitment or composition, as in trials, panels, and outbound
campaigns; on always-on platforms with fixed traffic composition
\citep{xie2016improving} it serves as a diagnostic complementing post-hoc
correction. Robust TWNA is the appropriate variant when pilot cells are small
or heavy-tailed.

Three conditions define the scope of the results. The groups are fixed and
pre-specified, so subgroup discovery is a separate problem. The shift is
compositional: group-specific effects are assumed to transport from the
experimental population to deployment, while their relative frequencies
change. Weight uncertainty is handled through a distribution or an ambiguity
set, and \Cref{prop:weight-robust} covers the three cases that arise, using
mean weights in the average case, least-favorable weights in the minimax
case, and the equalizing rule \(\rho_k\propto s_k^2\) under complete
ignorance.

The analysis is batch and optimizes a leading variance criterion; it does not
model recruitment costs, capacity constraints, interference, or multi-wave
adaptation. Relaxing these, as well as learning groups rather than fixing
them in advance, are natural next steps.

\section{Conclusion}
\label{sec:conclusion}

This paper designs two-stage stratified experiments for fixed GATEs when the
deployment population differs in composition from the experimental population.
TWNA jointly chooses group sample sizes and within-group treatment
probabilities through
\(m_k\propto\sqrt{q_k}(\sigma_{1k}+\sigma_{0k})\), with the usual Neyman
treatment fraction within each group. The closed-form oracle makes the
trade-off explicit, and the plug-in design recovers it when pilot
group--arm variance estimates stabilize. When deployment weights are uncertain,
average-case design uses their mean, whereas minimax design uses
least-favorable weights; under complete ignorance the equalizing rule is
\(m_k\propto(\sigma_{1k}+\sigma_{0k})^2\).

The empirical gains are largest when groups are deployment-important and
hard to measure, and the design tracks the oracle once the pilot supplies
stable group--arm scales; the robust variant extends the same rule to small
or heavy-tailed pilot cells. The broader lesson is simple: an experiment
intended to report target-weighted GATE precision should allocate its
sampling budget according to that target-weighted objective.

\section*{Appendix A: Proofs}
\label{app:proofs}

\subsection{Criterion and Proof Assumptions}
\label{app:assumptions}

The deployment weights \(q_k=Q(S=k)\) are known and strictly positive, with
\(\sum_k q_k=1\).

For group \(k\), let \(\sigma_{ak}^2=\operatorname{Var}_P\{Y(a)\mid S=k\}\).
Under treatment probability \(e\), the leading variance input is
\[
  V_k(e)
  =
  \frac{\sigma_{1k}^2}{e}
  +
  \frac{\sigma_{0k}^2}{1-e}.
\]

Let \(M\) be the final-stage budget, let \(m_k\) be the number of final-stage
experimental units assigned to group \(k\), and write
\[
  \rho_k=\frac{m_k}{M},
  \qquad
  \rho_k>0,
  \qquad
  \sum_{k=1}^K\rho_k=1.
\]
The leading design criterion is
\[
  \mathcal B_Q(\rho,e)
  =
  \sum_{k=1}^K
  \frac{q_k}{\rho_k}V_k(e_k).
\]

TWNA uses a balanced pilot experiment to estimate
\(\sigma_{1k}\) and \(\sigma_{0k}\). Given pilot estimates
\(\hat\sigma_{1k}\) and \(\hat\sigma_{0k}\), it sets
\[
  \hat e_k
  =
  \Pi_{[\epsilon,1-\epsilon]}
  \left(
  \frac{\hat\sigma_{1k}}
  {\hat\sigma_{1k}+\hat\sigma_{0k}}
  \right)
\]
with the balanced default \(1/2\) if the pilot denominator is zero. It then
forms the raw plug-in proportions
\[
  \tilde\rho_k
  =
  \frac{
  \sqrt{q_k}\,(\hat\sigma_{1k}+\hat\sigma_{0k})
  }{
  \sum_{\ell=1}^K
  \sqrt{q_\ell}\,(\hat\sigma_{1\ell}+\hat\sigma_{0\ell})
  },
\]
using the balanced default \(1/K\) if the denominator is zero. Fix a
deterministic allocation floor \(\rho_{\min}\in(0,1/K)\), let
\[
  \Delta_{\rho_{\min}}
  =
  \left\{
  \rho\in\mathbb R^K:
  \rho_k\ge\rho_{\min},\ 
  \sum_{k=1}^K\rho_k=1
  \right\},
\]
and set
\[
  \hat\rho
  =
  \Pi_{\Delta_{\rho_{\min}}}(\tilde\rho),
  \qquad
  \hat m_k\approx M\hat\rho_k.
\]
We use the following assumptions.

\paragraph{\Cref{ass:positivity-variance} (Positivity and bounded variance inputs).}
There exist constants \(0<c_\sigma<C_\sigma<\infty\) and
\(0<\epsilon<1/2\) such that
\[
  c_\sigma\le \sigma_{ak}\le C_\sigma,
  \qquad a\in\{0,1\},\quad k=1,\ldots,K,
\]
and
\[
  \frac{\sigma_{1k}}{\sigma_{1k}+\sigma_{0k}}
  \in[\epsilon,1-\epsilon].
\]
The deterministic allocation floor satisfies
\[
  0<\rho_{\min}<\min_{1\le k\le K}\rho_k^\star,
\]
where \(\rho^\star\) is the oracle allocation displayed in
\Cref{thm:proof-oracle} below. Thus the floor is nonbinding at the oracle
allocation.

\paragraph{\Cref{ass:pilot-consistency} (Pilot variance consistency).}
\[
  \begin{aligned}
    \Delta_{n_0}
    &:=\max_{a\in\{0,1\},\,k\le K}
    |\hat\sigma_{ak}-\sigma_{ak}|
    \overset{p}{\longrightarrow}0,\\
    \Delta_{n_0}&=O_p(n_0^{-1/2})
    \quad\text{for rate statements.}
  \end{aligned}
\]
\subsection*{Variant and Baseline Definitions}
\label{app:variant-definitions}

\paragraph{Robust TWNA.}
Let \(Y_{\mathrm{pilot}}\) denote pilot outcomes and define the global
robust scale \(\hat s=1.4826\,\mathrm{MAD}(Y_{\mathrm{pilot}})\), where the
constant \(1.4826\approx 1/\Phi^{-1}(0.75)\) makes \(\hat s\) consistent for
\(\sigma\) under Gaussian outcomes. For each group--arm pilot cell \((a,k)\)
with cell size \(n_{ak}\), winsorize the cell outcomes around their cell
median at radius \(c\hat s\) and let \(\hat\sigma_{ak,\mathrm{wins}}\)
denote the winsorized cell standard deviation. The stabilized pilot scale is
then
\[
  \tilde\sigma_{ak}
  =
  \sqrt{
    \frac{n_{ak}}{n_{ak}+\lambda}\,\hat\sigma_{ak,\mathrm{wins}}^2
    +
    \frac{\lambda}{n_{ak}+\lambda}\,\hat s^2
  },
\]
where \(\lambda > 0\) is a shrinkage weight. Robust TWNA is the same as TWNA
except that every occurrence of \(\hat\sigma_{ak}\) in the plug-in allocation
rule is replaced by \(\tilde\sigma_{ak}\). In all experiments \(c=3\) and
\(\lambda=5\), which leaves nearly Gaussian pilot cells unchanged while
adding moderate shrinkage for small or heavy-tailed cells. In the
implementation, both the global robust scale and the winsorized cell
standard deviations are floored at \(0.05\), and any pilot cell with fewer
than three observations uses the global robust scale directly; the same
floor and small-cell rule apply to the raw pilot estimates
\(\hat\sigma_{ak}\) used by TWNA and the other pilot-based designs,
implementing the bounded-variance-input condition of
\Cref{ass:positivity-variance}.

\paragraph{Learned-partition proxy.}
Using the pilot data, fit a ridge-regularized linear CATE score
\[
  \hat\gamma(x)
  =
  x^\top(\hat\beta_1-\hat\beta_0),
\]
where \(\hat\beta_a\) is the ridge regression of pilot outcomes on
standardized covariates within arm \(a\). Partition the full sampling frame
into \(K\) empirical score quantiles
\(\hat{\mathcal X}_1,\ldots,\hat{\mathcal X}_K\) using \(\hat\gamma(X)\). Let
\(\hat\sigma_{ak}^{\mathrm{LP}}\) be the pilot arm-specific standard deviation
within learned stratum \(k\). The proxy design then applies the HHK Neyman rule
within the learned strata:
\[
  \hat m_k^{\mathrm{LP}}
  \propto
  \hat\sigma_{1k}^{\mathrm{LP}}+\hat\sigma_{0k}^{\mathrm{LP}},
  \qquad
  \hat e_k^{\mathrm{LP}}
  =
  \Pi_{[\epsilon,1-\epsilon]}
  \left(
  \frac{\hat\sigma_{1k}^{\mathrm{LP}}}
  {\hat\sigma_{1k}^{\mathrm{LP}}+\hat\sigma_{0k}^{\mathrm{LP}}}
  \right).
\]
Sampling follows the learned strata, and the fixed-group estimator does not
reweight for original-group \(\times\) learned-stratum inclusion
probabilities, so group estimates carry a composition bias when effects are
heterogeneous within an original group; the fixed-group GATE vector is the
evaluation target, not a claim of unbiasedness.

\paragraph{\Cref{thm:proof-oracle} (Oracle Design).}
\label{app:proof-oracle}

\emph{Statement.} Under \Cref{ass:positivity-variance}, the oracle design
minimizes \(\mathcal B_Q(\rho,e)\) over
\(e_k\in[\epsilon,1-\epsilon]\), \(\rho_k>0\), and \(\sum_k\rho_k=1\). The
unique minimizer is
\[
  e_k^\star
  =
  \frac{\sigma_{1k}}{\sigma_{1k}+\sigma_{0k}},
  \qquad
  \rho_k^\star
  =
  \frac{\sqrt{q_k}\,(\sigma_{1k}+\sigma_{0k})}
  {\sum_{\ell=1}^K
  \sqrt{q_\ell}\,(\sigma_{1\ell}+\sigma_{0\ell})}.
\]

\begin{proof}
Both optimizers are the equality cases of two Cauchy--Schwarz bounds. We use the
Engel form: for \(x_i>0\),
\[
  \left(\sum_i \frac{a_i^2}{x_i}\right)\left(\sum_i x_i\right)
  \ge
  \left(\sum_i a_i\right)^2,
\]
so \(\sum_i a_i^2/x_i\ge(\sum_i a_i)^2/(\sum_i x_i)\), with equality iff
\(a_i/x_i\) is constant.

\emph{Treatment split.} For each \(k\), take \(a=(\sigma_{1k},\sigma_{0k})\) and
\(x=(e,1-e)\), so \(\sum_i x_i=1\) and
\[
  V_k(e)
  =
  \frac{\sigma_{1k}^2}{e}+\frac{\sigma_{0k}^2}{1-e}
  \ge
  (\sigma_{1k}+\sigma_{0k})^2,
\]
with equality iff \(\sigma_{1k}/e=\sigma_{0k}/(1-e)\), i.e.\
\(e=e_k^\star=\sigma_{1k}/(\sigma_{1k}+\sigma_{0k})\), which lies in
\([\epsilon,1-\epsilon]\) by \Cref{ass:positivity-variance}. This holds for every
\(k\) and does not depend on \(\rho\).

\emph{Group sizes.} Write \(s_k=\sigma_{1k}+\sigma_{0k}\) and apply the same bound
with \(a_k=\sqrt{q_k}s_k\), \(x_k=\rho_k\) (so \(\sum_k x_k=1\)):
\[
  \begin{aligned}
  \mathcal B_Q(\rho,e)
  &=
  \sum_k\frac{q_k}{\rho_k}V_k(e_k)
  \ge
  \sum_k\frac{q_k s_k^2}{\rho_k} \\
  &\ge
  \left(\sum_k\sqrt{q_k}s_k\right)^2.
  \end{aligned}
\]
The first inequality is the treatment split (equality iff \(e_k=e_k^\star\)); the
second is Cauchy--Schwarz (equality iff \(\sqrt{q_k}s_k/\rho_k\) is constant).
Both equality conditions are necessary and hold simultaneously at
\((\rho^\star,e^\star)\), which is therefore the unique minimizer, with
\[
  e_k^\star
  =
  \frac{\sigma_{1k}}{\sigma_{1k}+\sigma_{0k}},
  \qquad
  \rho_k^\star
  =
  \frac{\sqrt{q_k}\,(\sigma_{1k}+\sigma_{0k})}
  {\sum_{\ell=1}^K\sqrt{q_\ell}\,(\sigma_{1\ell}+\sigma_{0\ell})},
  \qquad
\]

\[
  V_k(e_k^\star)
  =
  (\sigma_{1k}+\sigma_{0k})^2,
\]
and optimal value
\(\mathcal B_Q(\rho^\star,e^\star)=\left(\sum_k\sqrt{q_k}s_k\right)^2\). This
proves \Cref{thm:proof-oracle}.
\end{proof}

\paragraph{\Cref{thm:proof-regret} (TWNA Regret).}

\emph{Statement.} Suppose
\Cref{ass:positivity-variance,ass:pilot-consistency}
hold, and let \((\hat\rho,\hat e)\) be the pilot-based TWNA. If pilot
consistency (\Cref{ass:pilot-consistency}) holds,
\[
  \Delta_{n_0}
  :=
  \max_{a\in\{0,1\},\,1\le k\le K}
  |\hat\sigma_{ak}-\sigma_{ak}|
  \overset{p}{\to}
  0,
\]
then
\[
  \max_k|\hat e_k-e_k^\star| \overset{p}{\to} 0,
  \qquad
  \max_k|\hat\rho_k-\rho_k^\star| \overset{p}{\to} 0,
\]
and
\[
  \left|
  \mathcal B_Q(\hat\rho,\hat e)
  -
  \mathcal B_Q(\rho^\star,e^\star)
  \right|
  \overset{p}{\to}
  0.
\]
If \(\Delta_{n_0}=O_p(n_0^{-1/2})\), then the conservative bound
\[
  \mathcal B_Q(\hat\rho,\hat e)
  -
  \mathcal B_Q(\rho^\star,e^\star)
  =
  O_p(n_0^{-1/2})
\]
holds, with constants depending on fixed \(K\).

\begin{proof}
The design evaluates the oracle formulas of \Cref{thm:proof-oracle} at the pilot
variances, then clips the split and projects the proportions onto the
floor-constrained simplex. We transfer pilot consistency to the design through
three explicit bounds, each obtained by subtracting two fractions whose
denominators are bounded away from zero. Throughout, \(C<\infty\) denotes a
constant depending only on \(c_\sigma,C_\sigma,\epsilon,K\) and the weights
\(q_k\), possibly changing between displays.

\emph{Reduction to deterministic bounds.} Let
\(E_{n_0}=\{\Delta_{n_0}\le c_\sigma/2\}\). On \(E_{n_0}\) each pilot variance
lies in \([c_\sigma/2,2C_\sigma]\) (\Cref{ass:positivity-variance}), so every
denominator below is bounded away from zero, and
\(P(E_{n_0}^c)\le P(\Delta_{n_0}\ge c_\sigma/2)\to0\). Suppose a random quantity
\(U_{n_0}\) satisfies \(|U_{n_0}|\le C\Delta_{n_0}\) on \(E_{n_0}\); each
\(U_{n_0}\) below is also bounded off \(E_{n_0}\): the split and proportion
differences lie in \([-1,1]\), and the criterion difference is bounded because
both designs lie in the compact set \(D\) defined below. Then for every \(\eta>0\),
\[
  P(|U_{n_0}|\ge\eta)
  \le P(\Delta_{n_0}\ge\eta/C)+P(E_{n_0}^c)\to0,
\]
so \(U_{n_0}\overset{p}{\to}0\); and if \(\Delta_{n_0}=O_p(n_0^{-1/2})\), the same
inequality gives \(U_{n_0}=O_p(n_0^{-1/2})\). It therefore suffices to prove the
deterministic bound \(|U_{n_0}|\le C\Delta_{n_0}\) on \(E_{n_0}\) in each case.

\emph{Treatment split.} With \(g(x,y)=x/(x+y)\) we have
\(e_k^\star=g(\sigma_{1k},\sigma_{0k})\), while the raw pilot split is
\(g(\hat\sigma_{1k},\hat\sigma_{0k})\). Over a common denominator,
\[
  \begin{aligned}
  &|g(\hat\sigma_{1k},\hat\sigma_{0k})-g(\sigma_{1k},\sigma_{0k})| \\
  &\quad=
  \frac{|\hat\sigma_{1k}\sigma_{0k}-\sigma_{1k}\hat\sigma_{0k}|}
       {(\hat\sigma_{1k}+\hat\sigma_{0k})(\sigma_{1k}+\sigma_{0k})}
  \le C\Delta_{n_0},
  \end{aligned}
\]
because adding and subtracting \(\sigma_{1k}\sigma_{0k}\) bounds the numerator by
\(C_\sigma(|\hat\sigma_{1k}-\sigma_{1k}|+|\hat\sigma_{0k}-\sigma_{0k}|)\), while on
\(E_{n_0}\) each denominator factor exceeds \(c_\sigma\). The clip
\(\Pi_{[\epsilon,1-\epsilon]}\) is non-expansive and fixes
\(e_k^\star\in[\epsilon,1-\epsilon]\) (\Cref{ass:positivity-variance}), so
\[
  \max_k|\hat e_k-e_k^\star|
  \le\max_k|g(\hat\sigma_{1k},\hat\sigma_{0k})-g(\sigma_{1k},\sigma_{0k})|
  \le C\Delta_{n_0}.
\]

\emph{Group proportions.} Put \(u_k=\sqrt{q_k}(\sigma_{1k}+\sigma_{0k})\),
\(S=\sum_\ell u_\ell\), with pilot analogues \(\hat u_k,\hat S\), so that
\(\rho_k^\star=u_k/S\) and the raw pilot proportion is
\(\tilde\rho_k=\hat u_k/\hat S\). Each increment is linear in the variance
errors:
\[
  |\hat u_k-u_k|
  \le\sqrt{q_k}\,(|\hat\sigma_{1k}-\sigma_{1k}|+|\hat\sigma_{0k}-\sigma_{0k}|)
  \le C\Delta_{n_0},
\]
\[
  |\hat S-S|\le\sum_\ell|\hat u_\ell-u_\ell|\le C\Delta_{n_0}.
\]
Splitting the difference of the two fractions and using
\(S,\hat S\ge c_\sigma\sum_\ell\sqrt{q_\ell}>0\) on \(E_{n_0}\) and \(u_k\le C\),
\[
  \begin{aligned}
  |\tilde\rho_k-\rho_k^\star|
  &=\left|\frac{\hat u_k}{\hat S}-\frac{u_k}{S}\right| \\
  &\le\frac{|\hat u_k-u_k|}{\hat S}
      +u_k\frac{|\hat S-S|}{\hat S\,S}
  \le C\Delta_{n_0}.
  \end{aligned}
\]
The Euclidean projection \(\Pi_{\Delta_{\rho_{\min}}}\) onto the closed convex set
\(\Delta_{\rho_{\min}}\) is non-expansive in \(\ell_2\) and fixes \(\rho^\star\),
since the floor is nonbinding there (\(\rho_k^\star>\rho_{\min}\),
\Cref{ass:positivity-variance}). It need not be non-expansive in the max norm, so
we pass through \(\ell_2\) via \(\|v\|_\infty\le\|v\|_2\le\sqrt K\,\|v\|_\infty\):
\[
\begin{aligned}
  \max_k|\hat\rho_k-\rho_k^\star|
  &\le\|\hat\rho-\rho^\star\|_2
  \le\|\tilde\rho-\rho^\star\|_2\\
  &\le\sqrt K\max_k|\tilde\rho_k-\rho_k^\star|
  \le C\Delta_{n_0},
\end{aligned}
\]
the fixed factor \(\sqrt K\) being absorbed into \(C\).

\emph{Criterion and regret.} With the true variances fixed,
\[
  \mathcal B_Q(\rho,e)
  =\sum_{k=1}^K\frac{q_k}{\rho_k}
   \left(\frac{\sigma_{1k}^2}{e_k}+\frac{\sigma_{0k}^2}{1-e_k}\right)
\]
is continuously differentiable on the fixed compact convex set
\(D=\{\rho:\rho_k\ge\rho_{\min}/2,\ \sum_k\rho_k=1\}\times[\epsilon,1-\epsilon]^K\),
on which \(\rho_k\ge\rho_{\min}/2>0\) and \(e_k\in[\epsilon,1-\epsilon]\); its
gradient is therefore bounded on \(D\), say by \(C\). The clip and the floor
projection give \((\hat\rho,\hat e),(\rho^\star,e^\star)\in D\). The mean-value
theorem along the segment joining these two points of the convex set \(D\) gives
\[
\begin{gathered}
  \bigl|\mathcal B_Q(\hat\rho,\hat e)
       -\mathcal B_Q(\rho^\star,e^\star)\bigr|\\
  \le C\max_k|\hat\rho_k-\rho_k^\star|
      +C\max_k|\hat e_k-e_k^\star|\\
  \le C\Delta_{n_0}
\end{gathered}
\]
on \(E_{n_0}\). By the reduction this is \(o_p(1)\), and \(O_p(n_0^{-1/2})\) when
\(\Delta_{n_0}=O_p(n_0^{-1/2})\), with constants depending on fixed \(K\).
This proves \Cref{thm:proof-regret}.

\emph{Remark.} The regret bound is conservative. Collect the true variances in
\(\theta=(\sigma_{ak})_{a,k}\), let \(\hat\theta\) be the pilot vector, and for a
variance vector \(\theta'\) write the design maps \(e^\star(\theta')\) and
\(\rho^\star_{\rm fl}(\theta')=\Pi_{\Delta_{\rho_{\min}}}\tilde\rho(\theta')\)
obtained by evaluating the oracle formulas at \(\theta'\). Suppose the optimum is
interior, so the clips and allocation floor are inactive near \(\theta\) and
\(H(\theta')=\mathcal B_Q(\rho^\star_{\rm fl}(\theta'),e^\star(\theta'))\), with
the true variances fixed, is \(C^2\) near \(\theta\). For every \(\theta'\) the
design \((\rho^\star_{\rm fl}(\theta'),e^\star(\theta'))\) is feasible while
\((\rho^\star,e^\star)\) minimizes \(\mathcal B_Q\), so \(H\) is minimized at
\(\theta'=\theta\); hence \(\nabla H(\theta)=0\) and a second-order Taylor
expansion gives
\[
  \begin{aligned}
  \mathcal B_Q(\hat\rho,\hat e)-\mathcal B_Q(\rho^\star,e^\star)
  &=H(\hat\theta)-H(\theta) \\
  &=O_p(\Delta_{n_0}^2)=O_p(n_0^{-1}),
  \end{aligned}
\]
even though the design errors themselves remain \(O_p(n_0^{-1/2})\).
\end{proof}

\paragraph{\Cref{prop:weight-robust} (Weight-robust design).}

\emph{Statement.} Under \Cref{ass:positivity-variance}, with
\(s_k=\sigma_{1k}+\sigma_{0k}\), \(\Delta_K^\circ=\{q:q_k>0,\ \sum_k q_k=1\}\), and
\(\mathcal Q\subseteq\Delta_K\) nonempty, convex, and compact:
(i) if a distribution on \(\mathcal Q\) has mean \(\bar q\in\Delta_K^\circ\), then
\(\mathbb E_q\,\mathcal B_q(\rho,e)\) is minimized at \(e_k^\star\) and
\(\rho_k^{\mathrm{Bayes}}\propto s_k\sqrt{\bar q_k}\);
(ii) if \(\mathcal Q\cap\Delta_K^\circ\neq\varnothing\), then
\(q^{\mathrm{lf}}=\arg\max_{q\in\mathcal Q}\sum_k s_k\sqrt{q_k}\) is unique and
interior and \((\rho^\star,q^{\mathrm{lf}})\) with
\(\rho_k^\star\propto s_k\sqrt{q_k^{\mathrm{lf}}}\) is a saddle point of value
\(\bigl(\sum_k s_k\sqrt{q_k^{\mathrm{lf}}}\bigr)^2\);
(iii) if \(\mathcal Q=\Delta_K\), then \(\rho_k^\star\propto s_k^2\) and
\(\mathcal B_q(\rho^\star,e^\star)=\sum_\ell s_\ell^2\) for every \(q\in\Delta_K\).

\begin{proof}
\emph{Treatment split and reduction.} By the treatment-split step of
\Cref{thm:proof-oracle}, for every \(\rho\) and every \(q\),
\(\mathcal B_q(\rho,e)\ge\sum_k q_k s_k^2/\rho_k\), with equality when
\(e_k=e_k^\star=\sigma_{1k}/(\sigma_{1k}+\sigma_{0k})\) for every \(k\). Since
\(e_k^\star\) is feasible (\Cref{ass:positivity-variance}), \(q\)-free, and
attains equality for every \(q\), it is optimal in (i)--(iii): for any \(e\),
\(\max_{q\in\mathcal Q}\mathcal B_q(\rho,e)\ge\max_{q\in\mathcal Q}
\mathcal B_q(\rho,e^\star)\), and averaging preserves the same ordering. It remains
to study the reduced criterion
\[
  g_q(\rho):=\sum_{k=1}^K\frac{q_k s_k^2}{\rho_k}=\mathcal B_q(\rho,e^\star),
  \qquad \rho_k>0,\ \sum_k\rho_k=1.
\]
For \(q\in\Delta_K^\circ\), the Cauchy--Schwarz (Engel-form) step of
\Cref{thm:proof-oracle} gives the inner-minimization identity
\[
  \min_{\rho}g_q(\rho)=\Big(\sum_k s_k\sqrt{q_k}\Big)^2,
  \]
  \[
  \qquad\text{attained uniquely at }\ \rho_k\propto s_k\sqrt{q_k}.
\]
If \(q\) has a zero coordinate this value is only an infimum, approached as the
corresponding \(\rho_k\downarrow0\); the identity is used below only at interior
points.

\emph{(i) Average case.} For fixed \((\rho,e)\), \(\mathcal B_q(\rho,e)\) is linear
in \(q\), so \(\mathbb E_q\,\mathcal B_q(\rho,e)=\mathcal B_{\bar q}(\rho,e)\) with
\(\bar q=\mathbb E[q]\in\Delta_K^\circ\). Minimizing the right-hand side is
\Cref{thm:proof-oracle} with \(q\) replaced by \(\bar q\), giving \(e_k^\star\) and
\(\rho_k^{\mathrm{Bayes}}\propto s_k\sqrt{\bar q_k}\).

\emph{(ii) Worst case.} Let \(\varphi(q)=\sum_k s_k\sqrt{q_k}\). Because
\(\sqrt{\cdot}\) is strictly concave on \([0,\infty)\) and \(s_k>0\), \(\varphi\) is
strictly concave and continuous on the compact convex set \(\mathcal Q\), so it has
a unique maximizer \(q^{\mathrm{lf}}\).

\emph{Interiority of \(q^{\mathrm{lf}}\).} Fix \(\tilde q\in\mathcal Q\cap
\Delta_K^\circ\). If \(q^{\mathrm{lf}}_j=0\) for some \(j\), set
\(q(t)=(1-t)q^{\mathrm{lf}}+t\tilde q\in\mathcal Q\); its \(j\)-th term contributes
\(s_j\sqrt{t\tilde q_j}=s_j\sqrt{\tilde q_j}\,\sqrt t\), while each other term
changes by \(O(t)\), so \(\varphi(q(t))-\varphi(q^{\mathrm{lf}})\ge
s_j\sqrt{\tilde q_j}\,\sqrt t-Ct>0\) for small \(t>0\), contradicting optimality.
Hence \(q^{\mathrm{lf}}\in\Delta_K^\circ\), so both
\(\nabla\varphi(q^{\mathrm{lf}})_k=s_k/(2\sqrt{q_k^{\mathrm{lf}}})\) and the
inner-minimization identity apply at \(q^{\mathrm{lf}}\).

\emph{Saddle point.} Set \(Z=\varphi(q^{\mathrm{lf}})\) and
\(\rho_k^\star=s_k\sqrt{q_k^{\mathrm{lf}}}/Z\), which is interior and hence
feasible. We verify
\[
  \begin{aligned}
  g_q(\rho^\star)&\le g_{q^{\mathrm{lf}}}(\rho^\star) \\
  &\le g_{q^{\mathrm{lf}}}(\rho)
  \quad\text{for all feasible }\rho\text{ and }q\in\mathcal Q.
  \end{aligned}
\]
The right inequality is the inner-minimization identity at \(q^{\mathrm{lf}}\),
which also gives \(g_{q^{\mathrm{lf}}}(\rho^\star)=Z^2\). For the left inequality,
substitute \(\rho^\star\):
\[
  g_q(\rho^\star)=\sum_k\frac{q_k s_k^2}{\rho_k^\star}
  =Z\sum_k\frac{s_k}{\sqrt{q_k^{\mathrm{lf}}}}\,q_k,
\]
which is linear in \(q\). Since \(q^{\mathrm{lf}}\) maximizes the concave
\(\varphi\) over the convex set \(\mathcal Q\), its first-order optimality condition
\(\sum_k\nabla\varphi(q^{\mathrm{lf}})_k(q_k-q_k^{\mathrm{lf}})\le0\) for all
\(q\in\mathcal Q\) reads \(\sum_k (s_k/\sqrt{q_k^{\mathrm{lf}}})\,q_k\le
\sum_k (s_k/\sqrt{q_k^{\mathrm{lf}}})\,q_k^{\mathrm{lf}}
=\sum_k s_k\sqrt{q_k^{\mathrm{lf}}}=Z\); hence
\(g_q(\rho^\star)\le Z^2=g_{q^{\mathrm{lf}}}(\rho^\star)\). A saddle point gives
\(\min_\rho\max_q g_q(\rho)=\max_q\min_\rho g_q(\rho)=\max_{q\in\mathcal Q}
\varphi(q)^2=Z^2\), the concrete instance of Sion's minimax theorem here.

\emph{(iii) Full ignorance.} Take \(\mathcal Q=\Delta_K\), for which
\(\mathcal Q\cap\Delta_K^\circ\neq\varnothing\) and (ii) applies. By
Cauchy--Schwarz, \(\varphi(q)=\sum_k s_k\sqrt{q_k}\le\sqrt{\sum_k q_k}\,
\sqrt{\sum_k s_k^2}=\sqrt{\sum_k s_k^2}\), with equality iff
\(\sqrt{q_k}\propto s_k\), i.e.\ \(q_k^{\mathrm{lf}}=s_k^2/\sum_\ell s_\ell^2\)
(equivalently, a Lagrange multiplier for \(\sum_k q_k=1\) gives
\(s_k/(2\sqrt{q_k})=\lambda\)). By the inner-minimization identity,
\(\rho_k^\star\propto s_k\sqrt{q_k^{\mathrm{lf}}}\propto s_k^2\). With
\(\rho_k^\star=s_k^2/\sum_\ell s_\ell^2\),
\[
  g_q(\rho^\star)=\sum_k q_k s_k^2\,\frac{\sum_\ell s_\ell^2}{s_k^2}
  =\Big(\sum_\ell s_\ell^2\Big)\sum_k q_k=\sum_\ell s_\ell^2
\]
for every \(q\in\Delta_K\), so \(\rho^\star\) is an equalizer with constant
worst-case value \(\sum_\ell s_\ell^2\).
\end{proof}

\begin{remark}[Price of robustness]
Let \(\mathcal Q=\mathcal Q_\delta(\hat q)\) be a nested family of convex regions
containing a pilot weight estimate \(\hat q\) and shrinking to \(\{\hat q\}\) as
\(\delta\downarrow0\), for instance a confidence region of radius
\(\delta=O_p(N_Q^{-1/2})\) from a target sample of size \(N_Q\). Then
\(q^{\mathrm{lf}}\to\hat q\) and the weight-robust design converges to the plug-in
TWNA at \(\hat q\); the nonnegative gap
\(\varphi(q^{\mathrm{lf}})^2-\varphi(\hat q)^2\) is the price of robustness. When
\(\min_k\hat q_k\ge c_q>0\) and \(\mathcal Q_\delta(\hat q)\) stays within that
interior neighborhood, \(\varphi\) is Lipschitz there and the price is
\(O(\delta)\); near the simplex boundary it can be \(O(\sqrt\delta)\). The family
interpolates between plug-in TWNA (\(\mathcal Q=\{\hat q\}\)) and the ignorance
rule \(\rho_k\propto s_k^2\) (\(\mathcal Q=\Delta_K\)).
\end{remark}

\begin{remark}[Naming]
The ``weight-robust'' design guards against deployment-weight uncertainty and is
distinct from the pilot-robust \emph{Robust TWNA} variant, which stabilizes the
variance inputs \(\sigma_{ak}\) against small or heavy-tailed pilot cells.
\end{remark}

\section*{Appendix B: Experimental Protocols and Supplemental Results}
\label{app:exp-tables}

This appendix records the data-generating parameters, sampling protocols,
and full results for the experiments in the main paper's Experiments section.

\subsection*{Simulation Study: Setup Details}
\label{app:simulation-setup}

All seven scenarios of the Simulation Study share \(K=5\)
groups, the same target GATEs
\(\tau=(0.10,0.25,0.40,0.55,0.70)\), and a group-balanced pilot draw: the
pilot's own group composition is generated with equal probability
\(1/5\) per group regardless of the scenario's deployment weights \(q_k\),
matching the ``balanced pilot'' description in the main text. Scenarios
differ only in the deployment shares \(q_k\) and the arm standard
deviations \(\sigma_{1k},\sigma_{0k}\) used to draw outcomes;
Table~\ref{tab:simulation-scenarios} lists all three vectors for every
scenario. Only \emph{arm-variance imbalance} and \emph{noisy pilot stress}
set \(\sigma_{1k}\neq\sigma_{0k}\), which is what gives the treatment
fraction \(e_k\) a nontrivial oracle value away from \(1/2\) in those two
scenarios; in every other scenario \(e_k^\star=1/2\) exactly.

For each scenario, each cell of the \(M\in\{500,1000,2000,5000\}\times
n_0\in\{100,250,500,1000\}\) grid is replicated \(n_{\mathrm{rep}}=1000\)
times with deterministic child seeds
\(\texttt{child\_seed}(\text{scenario\_id},\text{replication})\) (Appendix~D).
Each replication draws a balanced pilot of size \(n_0\), estimates
\(\hat\sigma_{1k},\hat\sigma_{0k}\) by the per-cell sample standard
deviation floored at \(0.05\) (Appendix~A), forms every design's
\((\hat m_k,\hat e_k)\) via the floor-constrained, largest-remainder
integer allocation of Appendix~A with \(\rho_{\min}=0.01\), and then draws
the final-stage GATE estimator from the first-order approximation
\(\hat\tau_k\sim N(\tau_k, V_k(\hat e_k)/\hat m_k)\); this isolates the allocation mechanism from finite-sample
estimation noise, which the end-to-end experiment (below) checks
separately. The oracle design in each replication uses the same
pilot-estimated \(\hat\sigma\) inputs only to compute the allocation
\emph{regret} diagnostic reported in the main text (design regret is the
gap between the pilot-based and true-variance allocations); its reported
MSE uses the true \(\sigma_{1k},\sigma_{0k}\) directly.

\begin{table*}[t]
\centering
\footnotesize
\caption{Simulation-study scenario definitions. All scenarios share
\(K=5\) groups and target GATEs
\(\tau=(0.10,0.25,0.40,0.55,0.70)\); the pilot is drawn with equal group
probability \(1/5\) in every scenario. Listed are the deployment shares
\(q_k\) and the two arm standard deviations \(\sigma_{1k},\sigma_{0k}\)
used to generate outcomes for group \(k=1,\ldots,5\). Only the bottom two
rows have \(\sigma_{1k}\neq\sigma_{0k}\).}
\label{tab:simulation-scenarios}
\begin{tabular}{lccccc ccccc ccccc}
\hline
& \multicolumn{5}{c}{$q_k$} & \multicolumn{5}{c}{$\sigma_{1k}$} & \multicolumn{5}{c}{$\sigma_{0k}$} \\
Scenario & 1&2&3&4&5 & 1&2&3&4&5 & 1&2&3&4&5 \\
\hline
Sanity equal & .20&.20&.20&.20&.20 & 1.0&1.0&1.0&1.0&1.0 & 1.0&1.0&1.0&1.0&1.0 \\
Deployment shift & .38&.25&.18&.12&.07 & 1.0&1.0&1.0&1.0&1.0 & 1.0&1.0&1.0&1.0&1.0 \\
Variance heterog. & .20&.20&.20&.20&.20 & .7&.9&1.2&1.7&2.3 & .7&.9&1.2&1.7&2.3 \\
Aligned shift+var & .07&.12&.18&.25&.38 & .7&.9&1.2&1.7&2.3 & .7&.9&1.2&1.7&2.3 \\
Anti-aligned & .38&.25&.18&.12&.07 & .7&.9&1.2&1.7&2.3 & .7&.9&1.2&1.7&2.3 \\
Arm-var imbalance & .20&.20&.20&.20&.20 & 2.4&.7&1.8&.8&2.1 & .7&2.1&.8&1.9&.9 \\
Noisy pilot stress & .07&.12&.18&.25&.38 & .7&.9&1.2&1.7&2.3 & 2.3&1.7&1.2&.9&.7 \\
\hline
\end{tabular}
\end{table*}

\subsection*{Core Simulation Results}

Table~\ref{tab:simulation-relative-mse} reports two representative
budget/pilot-size cells; the qualitative ranking is stable across the grid.
Table~\ref{tab:bidirectional} gives the bidirectional GATE-vs-ATE
comparison averaged across all scenarios and budget/pilot-size cells.
The literal HHK design---population stratum shares with the Neyman
treatment fraction---coincides with Uniform Neyman in these simulations
because groups have equal population shares; the HHK plug-in benchmark
therefore adds the ATE-optimal across-stratum allocation, as described in
the main-text setup.

\begin{table*}[t]
\centering
\small
\caption{Simulation relative target-weighted GATE MSE. Values are normalized by
uniform balanced sampling within each cell, so lower is better. With the larger
pilot, TWNA nearly reaches the oracle across scenarios; with the smaller pilot,
it remains competitive when deployment importance and variance conflict, while
one-signal rules can fail sharply in anti-aligned or noisy settings. HHK is the
ATE-optimal Neyman plug-in; U.~Neyman uses uniform group allocation with Neyman
treatment fraction; SL-ATE is the target-ATE allocation of Shi--Lin
($m_k \propto q_k\sqrt{\smash[b]{\sigma_{1k}^2+\sigma_{0k}^2}}$, $e_k=1/2$) and
SL-$\sqrt{q}$ is our adaptation of their fixed-$e$ rule to the target-weighted
GATE loss ($m_k \propto \sqrt{q_k}\sqrt{\smash[b]{\sigma_{1k}^2+\sigma_{0k}^2}}$,
$e_k=1/2$).}
\label{tab:simulation-relative-mse}
\begin{tabular}{lrrrrrrrrr}
\hline
Scenario & Uniform & Deploy & Variance & U.~Neyman & HHK & SL-ATE & SL-$\sqrt{q}$ & TWNA & Oracle \\
\hline
\multicolumn{10}{l}{\textit{$M=500$, $n_0=100$}} \\
Sanity equal & 1.000 & 1.000 & 1.021 & 1.039 & 1.071 & 1.036 & 1.034 & 1.073 & 1.004 \\
Deployment shift & 1.000 & 1.005 & 1.020 & 1.040 & 1.070 & 1.029 & 0.952 & 0.996 & 0.926 \\
Variance heterog. & 1.000 & 1.001 & 0.874 & 1.042 & 0.909 & 0.872 & 0.871 & 0.907 & 0.851 \\
Aligned shift+var & 1.000 & 0.701 & 0.745 & 1.042 & 0.777 & 0.750 & 0.707 & 0.737 & 0.689 \\
Anti-aligned & 1.000 & 1.596 & 1.088 & 1.040 & 1.151 & 1.095 & 1.006 & 1.053 & 0.982 \\
Arm-var imbalance & 1.000 & 1.001 & 1.029 & 0.850 & 0.876 & 1.031 & 1.036 & 0.883 & 0.825 \\
Noisy pilot stress & 1.000 & 0.971 & 1.008 & 0.870 & 0.891 & 0.994 & 0.915 & 0.811 & 0.762 \\
\hline
\multicolumn{10}{l}{\textit{$M=5000$, $n_0=1000$}} \\
Sanity equal & 1.000 & 1.000 & 1.001 & 1.002 & 1.003 & 1.007 & 1.007 & 1.006 & 1.004 \\
Deployment shift & 1.000 & 1.005 & 1.000 & 1.002 & 0.999 & 1.008 & 0.928 & 0.931 & 0.926 \\
Variance heterog. & 1.000 & 1.001 & 0.851 & 1.003 & 0.855 & 0.852 & 0.851 & 0.856 & 0.851 \\
Aligned shift+var & 1.000 & 0.701 & 0.730 & 1.003 & 0.732 & 0.734 & 0.689 & 0.693 & 0.689 \\
Anti-aligned & 1.000 & 1.596 & 1.069 & 1.003 & 1.075 & 1.071 & 0.985 & 0.986 & 0.982 \\
Arm-var imbalance & 1.000 & 1.001 & 0.996 & 0.829 & 0.829 & 0.996 & 0.997 & 0.827 & 0.825 \\
Noisy pilot stress & 1.000 & 0.971 & 0.977 & 0.845 & 0.831 & 0.964 & 0.888 & 0.764 & 0.762 \\
\hline
\end{tabular}
\end{table*}

\subsection*{End-to-End Validation and Design Trade-offs}
\label{app:joint-signal-setup}

\paragraph{End-to-end validation.}

The end-to-end validation uses the joint-signal \(K=5\) scenario---deployment
importance and statistical difficulty pointing at different groups---with
population/pilot shares \(p_k=1/5\),
deployment shares \(q=(0.50,0.20,0.10,0.10,0.10)\), target GATEs
\(\tau=(0.10,0.20,0.30,0.40,0.50)\), and arm standard deviations
\(\sigma_{1k}=\sigma_{0k}=(0.40,0.50,1.50,0.60,0.55)\); group 1 is deployment-heavy but easy, group 3
is deployment-light but the noisiest, and groups 2, 4, 5 are intermediate
comparators. Six designs are compared: uniform balanced, deployment-only,
variance-only, HHK plug-in, TWNA, and oracle.

Each of the \(n_{\mathrm{rep}}=1000\) replications, seeded by
\(\texttt{child\_seed}(81,\text{replication})\), draws a balanced pilot of
\(n_0=500\) individual outcomes (Bernoulli(0.5) treatment, group label drawn
with probability \(p_k\)), estimates \(\hat\sigma_{1k},\hat\sigma_{0k}\) by
the same floored per-cell sample standard deviation as the simulation
study above, computes each design's
\((\hat m_k,\hat e_k)\), and then---unlike the first-order simulation---draws
\(M=2000\) final-stage \emph{individual} outcomes with realized
\(\mathrm{Bernoulli}(\hat e_k)\) treatment assignment rather than sampling
the GATE estimator directly from its asymptotic law. Each group GATE is
estimated by the realized difference in arm means,
\(\hat\tau_k=\bar Y_{1k}-\bar Y_{0k}\). This makes the end-to-end check a
test of realized sampling rather than of the first-order approximation.

\paragraph{Pilot sizing.}

With a fixed total budget of 3000, the pilot-budget frontier in
\Cref{fig:pilot-frontier} shows a broad useful range rather than a universal
optimum: a small pilot leaves scale estimates noisy, while an overly large pilot
reduces final-stage precision. This frontier uses its own \(K=5\) scenario,
distinct from the joint-signal DGP above, with deployment shares
\(q=(0.08,0.12,0.18,0.25,0.37)\) (population/pilot shares uniform), target
GATEs \(\tau=(0.10,0.25,0.40,0.55,0.70)\), and arm standard deviations
\(\sigma_1=(0.75,1.0,1.3,1.8,2.4)\), \(\sigma_0=(0.65,0.9,1.2,1.6,2.2)\); the
total budget of 3000 is split into a pilot of size
\(n_0\in\{50,100,200,350,500,750,1000\}\) and a final stage of
\(M=3000-n_0\), with the same end-to-end sampling and difference-in-means
estimator as above and \(n_{\mathrm{rep}}=1000\) replications per pilot size.

\paragraph{Protocol for the exact weight-robust frontier.}

\Cref{fig:weight-robust-frontier} evaluates all three parts of
\Cref{prop:weight-robust} without Monte Carlo error. It fixes the
\emph{aligned shift-and-variance} scenario of the simulation study,
\(\hat q=(0.07,0.12,0.18,0.25,0.38)\) and
\(\sigma_{1k}=\sigma_{0k}=(0.7,0.9,1.2,1.7,2.3)\), and fixes the treatment
split at \(e_k^\star=1/2\). For
\[
  \mathcal Q_\delta(\hat q)
  =\{(1-\delta)\hat q+\delta u:u\in\Delta_K\},
  \qquad \delta\in[0,1],
\]
the least-favorable composition is computed by the KKT water-filling rule
\(q_k^{\mathrm{lf}}=\max\{(1-\delta)\hat q_k,c_\delta s_k^2\}\), where
\(c_\delta\) makes the weights sum to one. Every curve is then evaluated by
the exact objective \(\mathcal B_q(\rho,e^\star)\).

The left panel maximizes this objective over the \(K\) extreme points of
\(\mathcal Q_\delta(\hat q)\). The robust-minimax curve is the lower envelope,
meeting plug-in TWNA at \(\delta=0\) and the equalizer at \(\delta=1\); the
equalizer is flat because its risk is constant over the whole simplex. For the
right panel, give extreme point
\((1-\delta)\hat q+\delta e_j\) probability \(\hat q_j\). This distribution
has mean \(\hat q\) for every \(\delta\), so its average objective is exactly
\(\mathcal B_{\hat q}(\rho,e^\star)\). Thus it isolates the nominal price of
worst-case protection from a change in the Bayes mean.

\begin{figure*}[t]
\centering
\includegraphics[width=0.92\textwidth]{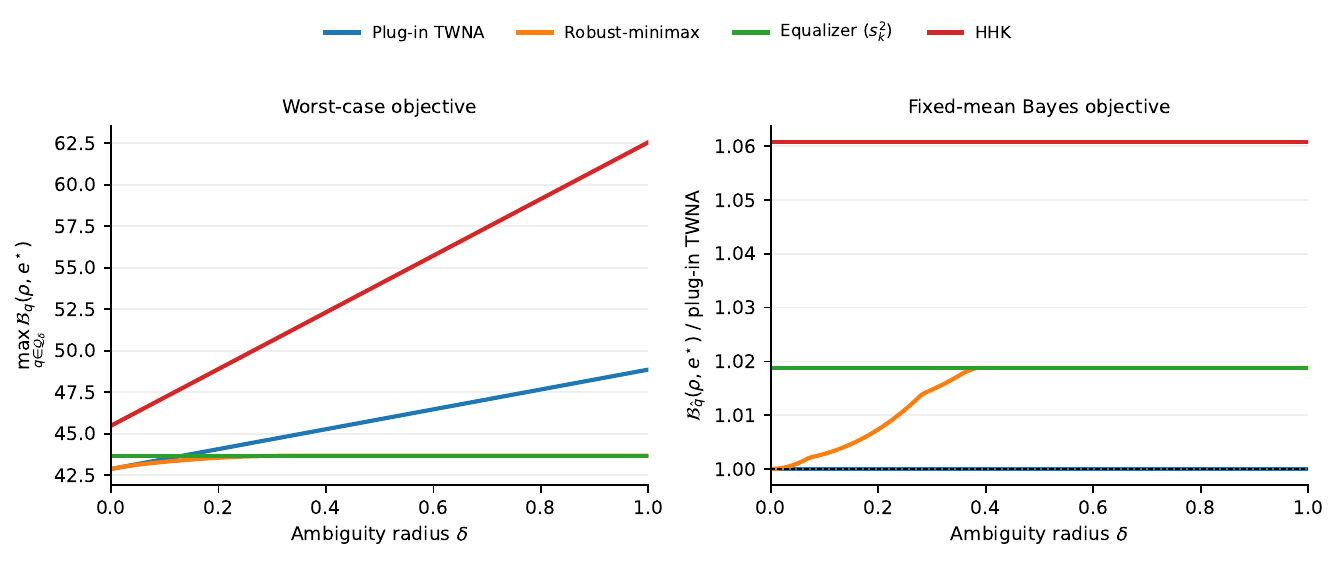}
\caption{Exact weight-robust frontier for \Cref{prop:weight-robust}. Left:
worst-case exact objective over \(\mathcal Q_\delta(\hat q)\); the equalizer is
flat, and robust-minimax connects plug-in TWNA to the equalizer. Right:
fixed-mean Bayes objective relative to plug-in TWNA at \(\hat q\). The plug-in
rule remains Bayes-optimal, while the robust-minimax curve quantifies the
nominal price paid for worst-case protection.}
\label{fig:weight-robust-frontier}
\end{figure*}

\begin{figure}[h]
\centering
\includegraphics[width=0.95\linewidth]{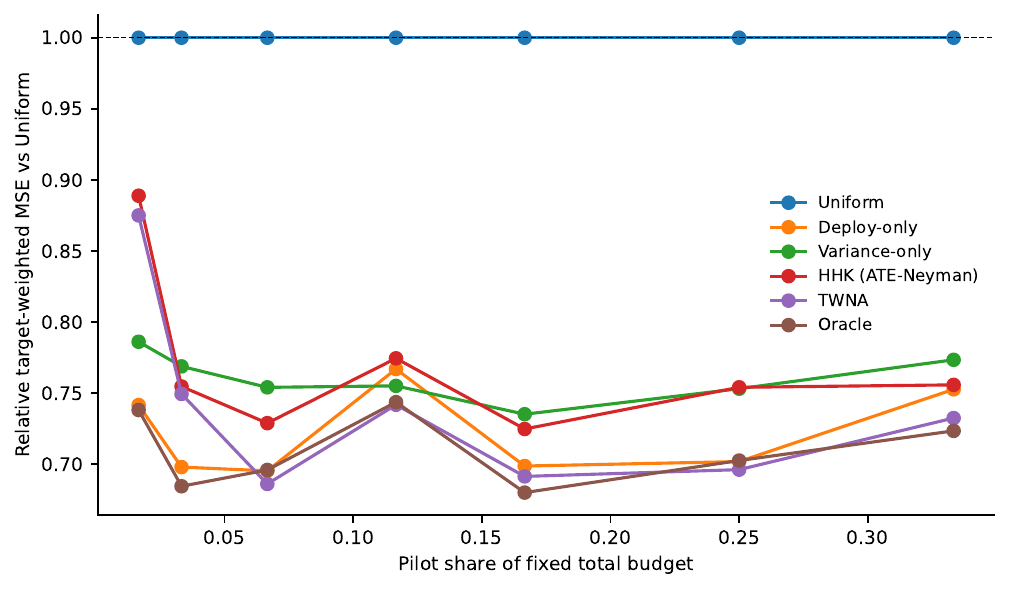}
\caption{Fixed-total-budget pilot frontier. Relative target-weighted MSE is
shown versus uniform allocation, with the pilot share varying under a fixed
total budget of 3,000. The useful region is broad but interior: too little pilot
data leaves scale estimates noisy, while too much pilot spending reduces final
stage precision.}
\label{fig:pilot-frontier}
\end{figure}

\begin{table}[h]
\centering
\small
\begin{tabular}{lrr}
\hline
Method & Rel.\ GATE MSE & Rel.\ ATE MSE \\
\hline
Oracle & 0.863 & 0.934 \\
TWNA & 0.885 & 0.958 \\
SL-$\sqrt{q}$ & 0.914 & 0.990 \\
HHK & 0.923 & 0.911 \\
Variance-only & 0.954 & 0.941 \\
SL-ATE & 0.956 & 1.153 \\
Uniform Neyman & 0.966 & 0.967 \\
Uniform balanced & 1.000 & 1.000 \\
Deployment-only & 1.039 & 1.255 \\
\hline
\end{tabular}
\caption{Bidirectional comparison: target-weighted GATE MSE vs.\ ATE MSE,
averaged across all simulation scenarios and budget/pilot-size cells. Values
are normalized by uniform balanced sampling, so lower is better. TWNA gives the
largest GATE reduction among feasible adaptive rules (0.885) while still
improving ATE MSE (0.958); HHK is better for ATE (0.911) but worse for
target-weighted GATE (0.923), and deployment-only worsens both objectives.
SL-ATE optimizes the \emph{target-population} ATE $\sum_k q_k\tau_k$, which is
neither of the two displayed objectives.}
\label{tab:bidirectional}
\end{table}

\subsection*{Covariate-Rich Benchmarks}

\paragraph{Calibrated synthetic benchmark.}

A calibrated synthetic benchmark uses a four-covariate risk model (three
continuous, one binary) with a variance profile calibrated to infant health
outcomes. Deployment weights focus on medium-to-high risk groups rather than
simply the highest-variance group, creating partial misalignment. With
$M=2{,}000$, $n_0=500$, and 500 Monte Carlo replications,
Table~\ref{tab:calibrated-results} shows TWNA achieves relative MSE 0.837,
outperforming HHK (0.908), deployment-only (1.298), variance-only (0.925),
Uniform Neyman (1.017), and the learned-partition proxy (1.049). Robust TWNA
is close at 0.848; oracle is 0.843.
The Shi--Lin target-ATE allocation (0.907) tracks HHK, while the
SL-$\sqrt{q}$ adaptation reaches 0.791; its paired Monte Carlo difference from
TWNA is $-0.0010$ with standard error $0.0008$, so the two are statistically
indistinguishable here. This is the pattern expected from the design: the
benchmark's arm standard deviations are nearly equal within every group
(ratios $\sigma_{1k}/\sigma_{0k} \le 1.2$), so the treatment-fraction margin
has little to offer and estimating $e_k$ from the pilot only adds noise.

\paragraph{Covariate model.}
The benchmark draws \(n=700\) units with four covariates: continuous
\(x_1,x_2,x_4\sim N(0,1)\) (birth-weight, gestational-age, and
mother-age proxies) and binary \(x_3\sim\mathrm{Bernoulli}(0.25)\) (a
minority-status proxy). A latent risk index
\(\mathrm{risk}=0.6x_1-0.3x_2+0.4x_3-0.2x_4+\eta\), \(\eta\sim N(0,0.3^2)\),
is cut at its empirical quintiles to form the \(K=5\) groups (group~5 is
highest risk). Potential outcomes are
\(Y(0)=\mu_0(x)+N(0,\sigma_{0k}^2)\) and
\(Y(1)=\mu_0(x)+\tau_k+N(0,\sigma_{1k}^2)\) within group \(k\), with
\(\mu_0(x)=0.4x_1+0.2x_2-0.3x_3+0.1x_4+0.3k\), nominal group effects
\(\tau=(0.10,0.22,0.35,0.50,0.68)\), and nominal arm standard deviations
\(\sigma_0=(0.50,0.70,1.00,1.50,2.20)\),
\(\sigma_1=(0.60,0.80,1.20,1.80,2.60)\) increasing with risk. Deployment
weights \(q=(0.05,0.15,0.45,0.25,0.10)\) concentrate on groups~3--4
(medium-to-high risk), deliberately \emph{not} on group~5, which has both
the highest \(\tau_k\) and the highest variance; this is the intended
partial misalignment between \(q_k\) and \(\sigma_k\). The realized
finite-population group moments of this one fixed draw---not the nominal
\(\tau,\sigma\) above---are the ground truth used for the oracle design
and for MSE evaluation, and every Monte Carlo replication
resamples pilots and final-stage individuals (with replacement) from this
same fixed population.

\paragraph{TM-tree implementation details.}
Following the subgroup-targeting extension of
\citet{tabord2023stratification}, which refines \emph{within} pre-specified
groups, each fixed group is refined by a depth-one tree fitted on the pilot:
exhaustive search over axis-aligned splits minimizes their empirical variance
criterion (at depth one this search is the exact global optimizer, so no
evolutionary approximation is involved), the depth in $\{0,1\}$ is chosen per
group by their two-fold cross-validation, assignment uses stratified block
randomization at the Neyman proportion within each leaf, and the group
estimate aggregates leaf-level differences in means by realized leaf shares.

The stratification-tree arms decompose the remaining margins. TM-tree
(Uniform), the faithful instantiation of \citet{tabord2023stratification} in
this setting, improves modestly on uniform composition (0.944; paired
$t=1.2$) through within-group stratification alone. Layering the identical
refinement on TWNA's composition gives 0.802. In paired comparisons, the
composition margin is decisive (TM-tree TWNA vs.\ TM-tree Uniform:
difference $-0.0030$, $t=3.5$), while the refinement margin on top of TWNA
is modest and insignificant ($-0.0008$, $t=1.0$). The depth cross-validation
behaves as their theory suggests: across replications it splits the three
low-noise groups 52--66\% of the time and the two noisiest groups only
12\%, where residual covariate structure is swamped by outcome noise.

\begin{table}[h]
\centering
\small
\begin{tabular}{@{}l@{\hspace{0.45em}}rrr@{}}
\hline
Method & Rel.\ MSE & MSE & SE \\
\hline
Uniform balanced & 1.000 & 0.0210 & 0.0007 \\
Deployment-only & 1.298 & 0.0272 & 0.0011 \\
Variance-only & 0.925 & 0.0194 & 0.0007 \\
Uniform Neyman & 1.017 & 0.0213 & 0.0007 \\
HHK & 0.908 & 0.0191 & 0.0006 \\
Shi--Lin (target ATE) & 0.907 & 0.0190 & 0.0006 \\
Shi--Lin ($\sqrt{q}$) & 0.791 & 0.0166 & 0.0006 \\
\shortstack[l]{Learned-partition\\proxy} & 1.049 & 0.0220 & 0.0007 \\
TM-tree (Uniform) & 0.944 & 0.0198 & 0.0007 \\
TM-tree (TWNA) & 0.802 & 0.0168 & 0.0005 \\
TWNA & 0.837 & 0.0176 & 0.0006 \\
Robust TWNA & 0.848 & 0.0178 & 0.0006 \\
Oracle & 0.843 & 0.0177 & 0.0005 \\
\hline
\end{tabular}
\caption{Calibrated synthetic benchmark. Relative MSE is target-weighted GATE
MSE normalized by uniform balanced sampling, so lower is better; SE is Monte
Carlo standard error over 500 replications. Deployment weights focus on
medium-to-high risk groups, which are not the highest-variance groups. TWNA
improves on HHK (0.837 versus 0.908) and is close to oracle (0.843). The
fixed-\(e\) SL-\(\sqrt{q}\) and
TM-tree (TWNA) variants are slightly lower here because within-group arm
variances are nearly equal, so estimating treatment fractions adds pilot noise.}
\label{tab:calibrated-results}
\end{table}

\paragraph{IHDP and LaLonde: setup details.}
\label{app:ihdp-lalonde-setup}

For both benchmarks of the IHDP and LaLonde section, the \(K=5\)
groups are constructed once from real covariates and held fixed across
\(500\) Monte Carlo replications; only the pilot draw, the resulting
design, and the resampled final-stage individuals vary by replication.
Each replication draws a pilot of size \(n_0\) without replacement from
the finite covariate population, assigns pilot treatment as
\(\mathrm{Bernoulli}(0.5)\), estimates \(\hat\sigma_{1k},\hat\sigma_{0k}\)
(and the robust variant of Appendix~A) from the realized pilot cells,
computes every design's \((\hat m_k,\hat e_k)\), and then samples the
final-stage individuals \emph{with replacement} from the group's finite
population of simulated potential outcomes, assigning realized
\(\mathrm{Bernoulli}(\hat e_k)\) treatment. The estimator is the group-arm
difference in means.
Both benchmarks use \(M=2{,}000\) final-stage units and \(500\)
replications, seeded by \(\texttt{child\_seed}(71,\cdot)\) for IHDP and
\(\texttt{child\_seed}(72,\cdot)\) for LaLonde (Appendix~D).

\paragraph{IHDP/NPCI.} Groups are quintiles of \(-\hat\mu_0(x)\), the
negative of the NPCI response-surface baseline, computed once on the
747-unit IHDP/NPCI covariate matrix (25 covariates), so that group~5 is the
highest-baseline-risk quintile; the pilot size is \(n_0=500\). Deployment
weights are \(q=(0.05,0.10,0.18,0.27,0.40)\), concentrating on the
high-risk quintile. The target GATEs \(\tau_k\) and arm standard
deviations \(\sigma_{1k},\sigma_{0k}\) used by the oracle design and for
evaluation are the finite-population moments of the NPCI simulated
potential outcomes within each quintile, not a further parametric model.
On IHDP the realized target-weighted MSE places TWNA marginally below the
oracle; the raw difference (TWNA minus oracle) is \(-0.00049\) with a 95\%
Monte Carlo interval \([-0.00129,\,0.00030]\) over the 500 replications, which
includes zero and identifies the ordering as finite-sample Monte Carlo
variation rather than a violation of the oracle definition, since the oracle
uses the same clipping, allocation floor, and rounding rules and differs only
in its variance inputs.

\paragraph{LaLonde/Dehejia--Wahba NSW.} The 185 treated and 260 control
units are pooled into one 445-unit covariate population (age, education,
race and marital-status indicators, and 1974/1975 pre-treatment earnings,
log(1+\(\cdot\))-transformed and standardized); groups are quintiles of
\(\log(1+\mathrm{re74})+\log(1+\mathrm{re75})\), with ties broken by rank
so that no quintile is empty, giving pilot size
\(n_0=\min(500,445)=445\). Because only one 1978 earnings outcome
(\(\mathrm{re78}\)) is observed per unit, potential outcomes are simulated
from a linear earnings model fit by least squares,
\(\hat\mu_0(x)=x^\top\hat\beta\) for \(\log(1+\mathrm{re78})\) on the
standardized covariates, with group-specific treatment effects
\(\tau=(0.42,0.34,0.25,0.16,0.08)\) decreasing in pre-treatment earnings
and arm standard deviations \(\sigma_0=(0.80,1.50,1.60,1.00,0.70)\),
\(\sigma_1=(0.90,1.70,1.80,1.10,0.80)\) that peak for low-to-middle
earners rather than only the lowest earners; deployment weights
\(q=(0.40,0.30,0.15,0.10,0.05)\) concentrate on the lowest-earnings
quintile. As with IHDP, the \(\tau_k,\sigma_{1k},\sigma_{0k}\) used by the
oracle design and for evaluation are the realized finite-population group
moments of the simulated \(Y(0),Y(1)\), and every replication resamples
pilots and final-stage individuals (with replacement) from this same
fixed population.

\subsection*{Misspecified Deployment Weights: Protocol}
\label{app:sensitivity-setup}

The misspecification study of the main text fixes the
\emph{aligned shift+variance} scenario, where correct deployment weights
matter most, with \(M=2{,}000\), \(n_0=500\), and \(n_{\mathrm{rep}}=500\)
replications seeded by \(\texttt{child\_seed}(90,\text{replication})\). For
each \(\delta\in\{0,0.10,0.25,0.50,0.75,1.0\}\) the design input is
\(q^{\mathrm{used}}=\Pi\{(1-\delta)q+\delta\,\mathbf 1/K\}\), where \(\Pi\)
renormalizes to the simplex, so \(\delta=0\) is exact knowledge and
\(\delta=1\) discards all information about deployment composition. Only TWNA
consumes \(q^{\mathrm{used}}\): the oracle uses the true \(q\) and the true
\(\sigma_{ak}\), HHK uses no weights at all, and uniform balanced uses no
pilot, so all three are constant in \(\delta\) by construction. Every design
is scored under the true \(q\), so the reported quantity is the cost of
designing for the wrong composition rather than a change of objective. Within
each replication all designs share the same pilot draw and the same
final-stage standard normal shock, which makes the method contrasts paired.
The paired TWNA-minus-HHK differences favor TWNA at every \(\delta<1\), with
\(t\)-statistics \(-7.97,-8.45,-9.02,-9.80,-10.42\) at
\(\delta=0,0.10,0.25,0.50,0.75\) and paired standard errors falling from
\(2.0\times10^{-4}\) to \(6.3\times10^{-5}\) in absolute MSE. At
\(\delta=1\) the difference is identically zero in every replication:
\(q^{\mathrm{used}}=\mathbf 1/K\) gives
\(\rho_k\propto\sqrt{1/K}\,s_k\propto s_k\), which is exactly the HHK plug-in
allocation, so the two designs coincide as functions of the pilot rather than
merely in mean. This makes the bound in the main text an identity: designing
for misspecified weights can never do worse than discarding them.

\subsection*{Interval Coverage and Width}
\label{app:coverage}

The groupwise intervals of the main text are recorded in every replication of
the simulation study. Empirical coverage sits at the nominal level for all
designs and every budget/pilot cell, ranging over \([0.947,0.954]\) across
scenarios, so the allocation rules do not distort the nominal level of the
interval. The target-weighted mean interval width, averaged over the seven
scenarios, separates the designs: at \(M=500\), \(n_0=100\) it is \(1.025\)
for uniform balanced, \(0.986\) for TWNA, and \(0.958\) for the oracle, and at
\(M=5{,}000\), \(n_0=1{,}000\) it is \(0.324\), \(0.304\), and \(0.303\)
respectively. The plug-in design therefore closes almost the whole
uniform-to-oracle gap in reported interval width once the pilot stabilizes.
Because this simulation draws \(\hat\tau_k\) from its first-order normal
approximation, the coverage figures verify the width bookkeeping rather than
a finite-sample normal approximation; the end-to-end experiment provides the
realized-sampling check.

\section*{Appendix C: Robustness, Stress Tests, and Objective Geometry}
\label{app:robustness}

This appendix separates theory-boundary checks, application stress tests, and
mechanism analyses. All simulation-based results use the population-resampling
pipeline of the main experiments.

\subsection*{Heavy-Tailed Outcomes}

The first stress test replaces Gaussian outcomes with a heavy-tailed \(t(3)\)
distribution, stressing the pilot variance estimates. \Cref{fig:appendix-heavy-tail} shows that the main ranking
is largely preserved: deployment-aware rules remain tightly clustered, with
robust TWNA slightly ahead of vanilla TWNA and both remaining
close to the oracle. The takeaway is not that heavy tails overturn the
allocation principle, but that the deployment-targeted design remains
competitive and that pilot regularization can provide a modest stability gain
when pilot cells are noisy.

\begin{figure}[t]
\centering
\includegraphics[width=0.95\linewidth]{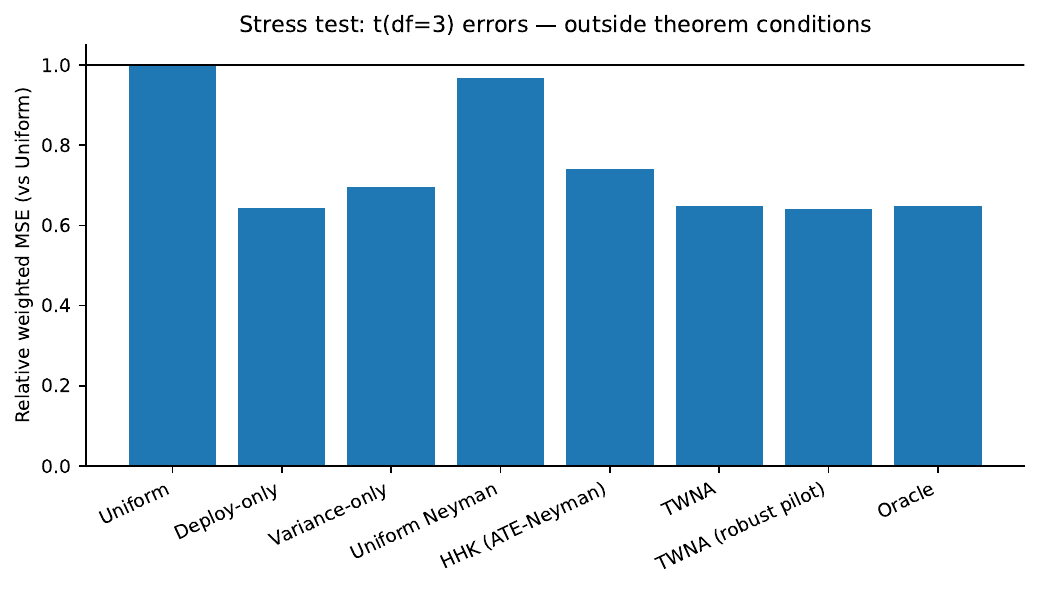}
\caption{Heavy-tailed outcome stress test with \(t(3)\) outcomes. Values are
relative target-weighted GATE MSE normalized by uniform balanced sampling, so
lower is better. The main ranking persists outside the clean theorem conditions:
robust TWNA is slightly ahead of vanilla TWNA, and both remain near the oracle
and well below uniform.}
\label{fig:appendix-heavy-tail}
\end{figure}

\subsection*{Scaling in the Number of Groups}

The second stress test increases the number of fixed groups from \(K=5\) to
\(K=20\) while scaling the final budget as \(M=200K\) so that expected
per-group sample size stays constant. \Cref{fig:appendix-large-k} shows that
TWNA stays below uniform balanced sampling at all three values of
\(K\). The gains are more modest than in the lower-\(K\) benchmarks, but the
pattern does not collapse as the partition grows. For fixed-group deployment
applications, this is the relevant sanity check: when per-group budget is
preserved, deployment-aware allocation continues to deliver a consistent
precision advantage.

\begin{figure}[t]
\centering
\includegraphics[width=0.95\linewidth]{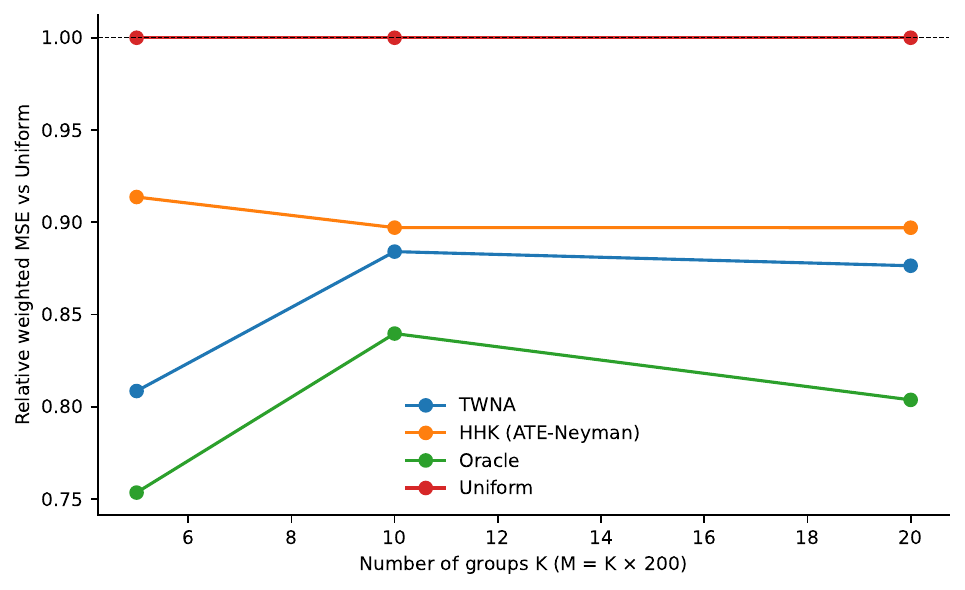}
\caption{Robustness to larger fixed-group partitions. Values are relative
target-weighted GATE MSE normalized by uniform balanced sampling, so lower is
better. The total final-stage budget scales with \(K\) so that expected
per-group sample size remains constant. TWNA remains below uniform for
\(K=5,10,20\), showing that the deployment-aware precision gain persists rather
than collapsing as the partition becomes finer.}
\label{fig:appendix-large-k}
\end{figure}

\subsection*{Application Stress Tests: Construction}

The six application stress cells use the same
population-resampling pipeline as the real-covariate benchmarks: a finite
simulated population with both potential outcomes fixes the estimands (group
effects, arm variances, and group shares are computed empirically from the
population), each replication draws a fresh balanced pilot and a fresh final
sample under each design, and all thirteen designs share the pilot within a
replication. The two deployment profiles per domain share the identical
population and differ only in $q_k$, so aligned/anti-aligned contrasts hold
every population moment fixed. All evaluations use the true $q_k$ of the
profile being run.

\emph{Economic (zero-inflated earnings).} A barrier-to-employment score built
from three covariates defines $K=5$ strata. The outcome is unconditional
annual earnings in thousands of dollars, including zeros for non-employment:
$Y = \mathrm{Bernoulli}(\pi_k)\cdot\mathrm{LogNormal}(\log \mu_k, s_k)$ with
control employment rates $\pi_k$ from 0.86 down to 0.18, median earnings
$\mu_k$ from 28.0 down to 8.5, and log-scale $s_k$ from 0.45 up to 1.95
across barrier strata. Treatment raises employment by 0.02--0.07 and median
earnings by 1.5--5.5\%, both increasing with barrier. The estimand is total
earnings including non-employment; it is not a wage conditional on
employment, so treatment-induced employment selection is out of scope by
construction. The aligned profile puts $q_k=(0.05,0.10,0.15,0.25,0.45)$ on
the highest-barrier strata; the anti-aligned profile reverses it. Sizes:
$M=2{,}000$, $n_0=500$, 500 replications per profile.

\emph{Biostatistics (rare-to-moderate binary endpoint).} Five sites with
enrollment shares $(0.45,0.25,0.16,0.10,0.04)$ enroll individually
randomized patients; the DGP is not cluster-randomized and has no intra-site
correlation. Baseline adverse-event rates rise from 0.03 to 0.25 across
sites and absolute risk reductions from 0.006 to 0.060, so per-unit Bernoulli
variance increases with site risk over this range. The aligned profile
$q_k=(0.05,0.08,0.13,0.24,0.50)$ targets the smallest-enrollment,
highest-risk site; the anti-aligned profile reverses it. The pilot is fixed
at $n_0=100$ (roughly twenty patients per site) in both quick and full runs
because the sparse pilot is itself the stress condition; $M=5{,}000$,
500 replications per profile.

\emph{Online platform (spike-mixture revenue).} Eight segments carry Zipf
traffic shares ($\propto r^{-1.15}$). Revenue per user is zero-inflated:
baseline conversion rates are 0.041--0.055 and treatment multiplies them by
1.02, a two-percent relative lift. Converted revenue in six segments is
moderately dispersed with mean order values \$35--\$70; segment seven is a
stable whale (mean \$450, standard deviation \$28); segment eight is
volatile, mixing a \$45 log-normal base with a 30\% chance of roughly
\$1{,}000 spike orders. The aligned profile weights segments by traffic
share times order value, targeting the stable whale; the anti-aligned
profile weights by squared total standard deviation, targeting the volatile
segment. Sizes: $M=100{,}000$, $n_0=5{,}000$, 150 replications per profile.
This budget is deliberately a low-power, traffic-constrained regime for a
two-percent lift---adequately powered two-arm tests at this baseline require
millions of users---so the cell tests allocation quality under low
signal-to-noise, not a claim of adequate power. None of the three scenarios
models sample-ratio mismatch, instrumentation error, interference,
novelty or ramp-up effects, delayed outcomes, or ratio metrics.

\subsection*{Application Stress Tests: Full Results}

\Cref{tab:domain-stress-full} reports all thirteen designs. The tail-risk
mechanism in the aligned platform cell is documented in
\Cref{tab:domain-tail-risk}: plug-in TWNA and
TM-tree (TWNA) carry across-replication coefficients of variation near 6 and
worst-replication losses two orders of magnitude above the oracle's, while
the robust variant restores both to baseline levels. TM-tree (TWNA) inherits
the plug-in's fragility because it shares the plug-in composition; the
refinement neither causes nor repairs the tail.

\begin{table}[t]
\centering
\footnotesize
\setlength{\tabcolsep}{2.5pt}
\begin{tabular}{lrrrrrr}
\hline
 & \multicolumn{2}{c}{Economic} & \multicolumn{2}{c}{Biostatistics} & \multicolumn{2}{c}{Online platform} \\
Method & align & anti & align & anti & align & anti \\
\hline
Uniform & 1.000 & 1.000 & 1.000 & 1.000 & 1.000 & 1.000 \\
Deploy-only & 0.420 & 2.975 & 0.712 & 1.441 & 0.853 & 0.287 \\
Variance-only & 0.758 & 0.949 & 1.157 & 1.513 & 0.483 & 0.493 \\
Uniform Neyman & 1.923 & 1.191 & 1.525 & 1.551 & 1.462 & 2.126 \\
HHK plug-in & 1.345 & 1.515 & 1.551 & 2.264 & 0.597 & 0.829 \\
SL (target ATE) & 0.411 & 1.725 & 0.997 & 1.631 & 0.746 & 0.381 \\
SL-$\sqrt{q}$ & 0.589 & 1.383 & 0.884 & 1.290 & 0.583 & 0.353 \\
Learned-partition & 0.810 & 1.822 & 5.951 & 2.715 & 2.114 & 2.017 \\
TM-tree (Uniform) & 2.920 & 1.460 & 1.238 & 1.508 & 1.576 & 2.486 \\
TM-tree (TWNA) & 1.107 & 4.007 & 1.193 & 1.821 & 1.923 & 0.934 \\
TWNA & 0.893 & 1.481 & 1.317 & 2.022 & 1.800 & 0.891 \\
Robust TWNA & 0.818 & 2.122 & 0.736 & 1.050 & 0.829 & 0.422 \\
Oracle & 0.300 & 0.695 & 0.649 & 0.949 & 0.383 & 0.240 \\
\hline
\end{tabular}
\caption{Domain-shaped stress tests, all thirteen designs: target-weighted
GATE MSE relative to uniform balanced sampling (lower is better). ``align''
and ``anti'' are the aligned and anti-aligned deployment profiles.}
\label{tab:domain-stress-full}
\end{table}

\begin{table}[t]
\centering
\small
\begin{tabular}{@{}lrr@{}}
\hline
Method & CV of weighted MSE & Worst replication \\
\hline
Oracle & 0.81 & 2.3 \\
Robust TWNA & 0.90 & 3.8 \\
Variance-only & 1.16 & 5.5 \\
Uniform & 1.23 & 10.1 \\
TWNA & 6.02 & 152.2 \\
TM-tree (TWNA) & 6.09 & 164.1 \\
\hline
\end{tabular}
\caption{Tail risk in the aligned platform cell: across-replication
coefficient of variation and worst-replication raw target-weighted MSE.
Plug-in TWNA's failure is a tail event driven by pilots that miss the rare
spike orders; the winsorizing robust estimator removes it.}
\label{tab:domain-tail-risk}
\end{table}

\subsection*{Mechanism Checks: Construction}

This supplementary suite isolates three mechanisms of the allocation rule
under population resampling; each is a mechanism validation, not a prevalence
claim about any domain. The \emph{eligibility rollout} isolates the
cross-group $\sqrt{q}$-composition margin: $K=20$ strata with $q_k$ mass on
four eligible strata and a flat variance profile turn the treatment-fraction
margin off, so TWNA and SL-$\sqrt{q}$ coincide (paired difference zero to four
decimals) while HHK's variance-proportional allocation degrades to $1.045$;
the cell doubles as a large-$K$ scaling check ($M=20{,}000$, $n_0=2{,}000$,
250 replications). \emph{Exacerbation counts} isolate the treatment-fraction
margin: $K=5$ overdispersed counts in which treatment reduces both mean and
variance, so the Neyman fraction assigns more than half of severe strata to
control and TWNA improves on the fixed-$e$ SL-$\sqrt{q}$ design ($M=5{,}000$,
$n_0=1{,}000$, 300 replications). \emph{Contaminated pilot} identifies the
plug-in/robust boundary: the biomarker regime with 1.5\% of pilot outcomes
replaced by recording errors at $\pm 50$ global standard deviations from the
pilot median, applied to the pilot only; a pilot-size sweep re-runs it at
fixed total budget $n_0+M=6{,}000$ with $n_0\in\{50,100,250,500,1000\}$, 200
replications per size, locating where the plug-in overtakes the robust rule.

Four further regimes appear in \Cref{tab:favorable-full} for completeness
without separate narrative: \emph{microenterprise grants} ($K=6$
growth-potential strata) and \emph{biomarker responders} ($K=5$ severity
strata) are additional treatment-fraction cases whose responder-mixture
treated arms inflate treated variance in the $q_k$-targeted strata;
\emph{latency/engagement} ($K=8$ Zipf-traffic segments) is an additional
cross-group case; and \emph{capped revenue} is the aligned platform stress
scenario winsorized at its 99th percentile. Microenterprise and biomarker use
$M=5{,}000$, $n_0=1{,}000$, 300 replications; latency and capped revenue use
$M=20{,}000$, $n_0=5{,}000$, 150 replications.

\subsection*{Mechanism Checks: Full Results}

\Cref{tab:favorable-full} reports all thirteen designs.
TM-tree (TWNA) is best or ties best in four of seven regimes, consistent
with the calibrated-benchmark finding that the within-group refinement
composes with TWNA's cross-group composition; it fails alongside plug-in
TWNA under contamination (1.313). The two cells in which TWNA's mean dips
below the oracle's (microenterprise, eligibility) are Monte Carlo artifacts,
the paired TWNA-minus-oracle differences including zero. In the pilot-size
sweep (\Cref{tab:favorable-sweep}) robust TWNA leads only at $n_0=50$; from
$n_0=250$ the plug-in leads, and both approach the oracle as $n_0$ grows.

\begin{table*}[t]
\centering
\small
\setlength{\tabcolsep}{3.5pt}
\begin{tabular}{lrrrrrrr}
\hline
Method & Micro & Elig. & Biom. & Counts & Latency & Capped & Contam. \\
\hline
Uniform & 1.000 & 1.000 & 1.000 & 1.000 & 1.000 & 1.000 & 1.000 \\
Deploy-only & 0.481 & 0.985 & 0.602 & 0.497 & 0.795 & 0.955 & 0.666 \\
Variance-only & 0.595 & 0.968 & 0.697 & 0.565 & 0.800 & 0.893 & 1.402 \\
Uniform Neyman & 0.713 & 0.997 & 0.753 & 0.770 & 0.929 & 1.101 & 1.307 \\
HHK plug-in & 0.465 & 1.045 & 0.612 & 0.457 & 0.707 & 0.963 & 1.880 \\
SL (target ATE) & 0.549 & 0.971 & 0.664 & 0.510 & 0.765 & 0.985 & 0.772 \\
SL-$\sqrt{q}$ & 0.514 & 0.609 & 0.650 & 0.511 & 0.721 & 0.848 & 0.915 \\
Learned-partition & 0.590 & 0.991 & 1.037 & 0.779 & 1.493 & 1.941 & 2.339 \\
TM-tree (Uniform) & 0.700 & 0.967 & 0.822 & 0.736 & 0.903 & 1.231 & 1.587 \\
TM-tree (TWNA) & 0.390 & 0.591 & 0.563 & 0.392 & 0.619 & 1.024 & 1.313 \\
TWNA & 0.387 & 0.598 & 0.543 & 0.442 & 0.668 & 0.937 & 1.256 \\
Robust TWNA & 0.432 & 0.621 & 0.546 & 0.453 & 0.721 & 0.857 & 0.553 \\
Oracle & 0.399 & 0.625 & 0.527 & 0.397 & 0.661 & 0.784 & 0.516 \\
\hline
\end{tabular}
\caption{Supplementary mechanism checks, all thirteen designs:
target-weighted GATE MSE relative to uniform balanced sampling (lower is
better). The seven columns are mechanism regimes: microenterprise grants
(Micro), eligibility rollout (Elig.), biomarker responders (Biom.),
exacerbation counts (Counts), latency/engagement (Latency), capped revenue
(Capped), and contaminated pilot (Contam.).}
\label{tab:favorable-full}
\end{table*}

\begin{table}[t]
\centering
\small
\begin{tabular}{rrrrr}
\hline
$n_0$ & TWNA & Robust TWNA & SL-$\sqrt{q}$ & Oracle \\
\hline
50 & 0.713 & 0.584 & 0.707 & 0.476 \\
100 & 0.614 & 0.613 & 0.615 & 0.477 \\
250 & 0.523 & 0.654 & 0.666 & 0.510 \\
500 & 0.549 & 0.545 & 0.619 & 0.538 \\
1000 & 0.448 & 0.508 & 0.555 & 0.431 \\
\hline
\end{tabular}
\caption{Pilot-size sweep on the biomarker regime at fixed total budget
$n_0+M=6{,}000$: relative target-weighted GATE MSE versus uniform. Robust
TWNA leads only at the smallest pilot; the plug-in leads from $n_0=250$.}
\label{tab:favorable-sweep}
\end{table}

\subsection*{Exact Objective Geometry}

The phase diagram of \Cref{fig:favorable-phase} is an exact calculation on
the population objective, not a Monte Carlo experiment. With $K=5$, uniform
$p_k$, control standard deviations $(0.85, 1.00, 1.30, 1.75, 2.30)$, and
$M=100{,}000$ with the allocation floor removed, deployment weights
interpolate from uniform to $(0.03, 0.06, 0.12, 0.27, 0.52)$ and the two
highest-$q$ strata scale their treated-arm standard deviation by a ratio
ranging from 1 to 4. Every design is evaluated at its exact objective value
under the true variances; the best feasible baseline is the minimum over
uniform, deploy-only, variance-only, HHK, and SL-$\sqrt{q}$. TWNA never
loses on this map by construction---it is the objective's minimizer under
true inputs---so the map's content is the size of the gain, not its sign.
The gain over the best feasible baseline peaks near 12\% at arm ratio 2.5
and falls at higher ratios because HHK shares the Neyman treatment fraction
and closes the gap as the treatment-fraction margin grows; the gain over
fixed-$e$ designs rises monotonically to 24\% at ratio 4.

\begin{figure}[t]
\centering
\includegraphics[width=0.95\linewidth]{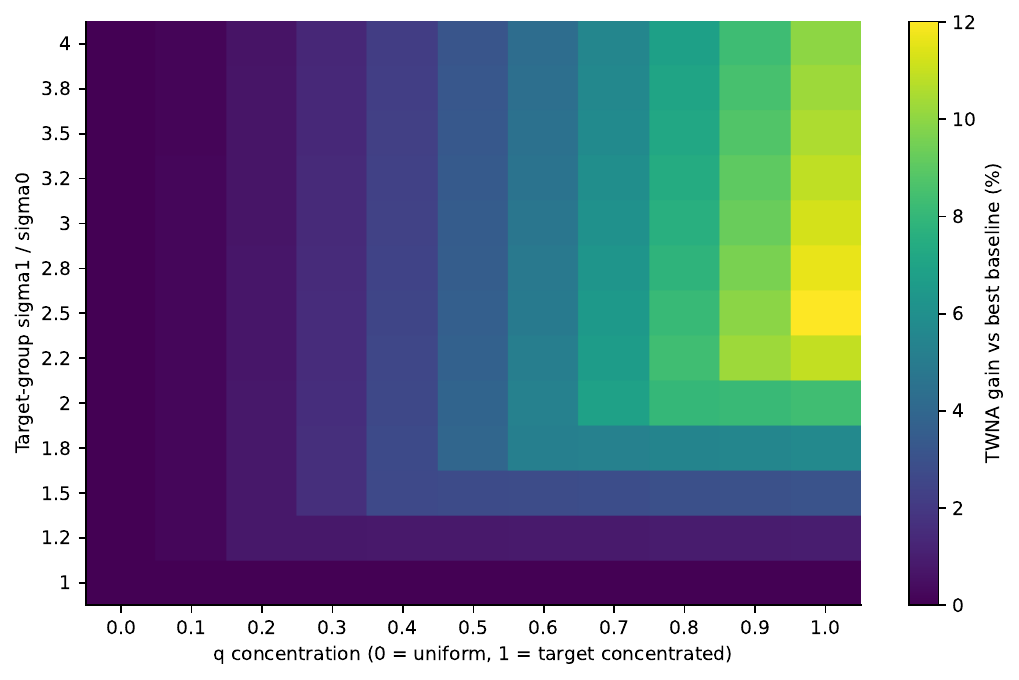}
\caption{Exact objective geometry over deployment concentration and
within-group arm-variance imbalance, computed from true variances with the
allocation floor removed. Left: TWNA's guaranteed gain over the best feasible
baseline peaks near 12\%. Right: the gain over fixed-$e$ designs
(SL-$\sqrt{q}$) grows with arm imbalance, reaching 24\%.}
\label{fig:favorable-phase}
\end{figure}

\section*{Appendix D: Reproducibility Details}

The empirical pipeline is organized as a small code appendix in the repository.
File \texttt{experiment/code/run\_all.py} reproduces the full tables and
figures, \texttt{config.py} fixes the main run configuration, and
\texttt{plotting.py} writes the paper figures and tables. The design rules are
implemented in \texttt{designs.py}; estimation and evaluation are implemented in
\texttt{estimators.py} and \texttt{metrics.py}; the simulation, benchmark, and
appendix drivers are \texttt{simulation.py}, \texttt{real\_data.py},
\texttt{sensitivity.py}, \texttt{weight\_robust.py}, and \texttt{appendix.py}.
The exact weight-robust frontier uses no random-number generation. All data loading and
pre-processing used in the paper are contained in \texttt{dgp.py} and
\texttt{real\_data.py}, including group construction, pilot sampling, and the
IHDP/NPCI and LaLonde benchmark transformations.

The default full run uses the fixed configuration in
\path{experiment/results/final_result_for_paper/configs/run_config.json}: global seed 20260522,
\(K=5\), \(M\in\{500,1000,2000,5000\}\),
\(n_0\in\{100,250,500,1000\}\), \(n_{\mathrm{rep}}=1000\),
\(\epsilon=0.05\), and \(\alpha=0.05\). For experiment (i) and replication
(r), deterministic child seeds are generated by
\[
  \begin{aligned}
  s_{i,r} &:= \texttt{child\_seed}(i,r) \\
          &= 20260522 + 100000\cdot i + r.
  \end{aligned}
\]
The appendix stress test for heavy tails fixes \(df=3\), and the scaling study
below varies only the number of fixed groups \(K\). The experiments were run on
a MacBook Air M2 with 16GB RAM running macOS 15.7.3 and Python 3.13.9, using
NumPy 2.3.5, pandas 2.3.3, matplotlib 3.10.8, and scikit-learn 1.6.1.

Every experiment turns the raw plug-in proportions of Appendix~A into
integers with the same deterministic rule: proportions are first projected
onto the floor-constrained simplex with \(\rho_{\min}=0.01\)
(\texttt{floor\_constrained\_proportions} in \texttt{designs.py}), then
\(M\hat\rho_k\) is floored and the \(M-\sum_k\lfloor M\hat\rho_k\rfloor\)
remaining units are given, one each, to the groups with the largest
fractional remainders (the standard largest-remainder / Hamilton
apportionment rule), so that \(\sum_k\hat m_k=M\) exactly in every
replication.


\end{document}